\def\cvx{\mathrm{cvx}}
\begin{document}

%

%
\runningauthor{Bracale, Maia Polo, Maity, Somerstep, Banerjee, Sun}

\twocolumn[

\aistatstitle{Microfoundation Inference for Strategic Prediction}

\aistatsauthor{ Daniele Bracale$^*$ \And Felipe Maia Polo$^*$ \And  Subha Maity$^*$ }

\aistatsaddress{ dbracale@umich.edu\\
Department of Statistics\\
University of Michigan \And maiapolo@umich.edu\\
Department of Statistics\\
University of Michigan\And smaity@uwaterloo.ca\\Department of Statistics\\\& Actuarial Science \\University of Waterloo } 

\aistatsauthor{ Seamus Somerstep$^*$ \And Moulinath Banerjee \And Yuekai Sun }

\aistatsaddress{ smrstep@umich.edu \\
Department of Statistics\\
University of Michigan \And moulib@umich.edu\\Department of Statistics\\University of Michigan \And yuekai@umich.edu\\Department of Statistics\\University of Michigan }
]

\renewcommand{\thefootnote}{} 
\footnotetext{* Equal contribution.}
\renewcommand{\thefootnote}{\arabic{footnote}} 

\begin{abstract}
Often in prediction tasks, the predictive model itself can influence the distribution of the target variable, a phenomenon termed \emph{performative prediction}. Generally, this influence stems from strategic actions taken by stakeholders with a vested interest in predictive models.
A key challenge that hinders the widespread adaptation of performative prediction in machine learning is that practitioners are generally unaware of the social impacts of their predictions. To address this gap, we propose a methodology for learning the distribution map that encapsulates the long-term impacts of predictive models on the population. Specifically, we model agents' responses as a cost-adjusted utility maximization problem and propose estimates for said cost.
Our approach leverages optimal transport to align pre-model exposure (\emph{ex ante}) and post-model exposure (\emph{ex post}) distributions. We provide a rate of convergence for this proposed estimate and assess its quality through empirical demonstrations on a credit scoring dataset.
\end{abstract}

\section{Introduction}


Consider a supervised learning scenario where the predictive model triggers actions that alter the target distribution, thereby impacting its own performance. This scenario is referred to as \emph{performative prediction} \citep{perdomo2020Performative} and is prevalent in {social} prediction contexts, where predictions influence (strategic) individuals. For example, traffic forecasts affect traffic patterns, crime location predictions influence police deployment that may deter crime, and stock price predictions drive trading activity, consequently affecting prices. Campbell's law \citep{campbell1979assessing}, which states that
\vspace{-0.2cm}
\begin{quote}
The more any quantitative social indicator is used for social decision-making, the more subject it will be to corruption pressures and the more apt it will be to distort and corrupt the social processes it is intended to monitor.
\vspace{-0.2cm}
\end{quote}
acknowledges the ubiquity of performativity in social prediction problems.


The learner may respond to this performative distribution shift by frequently retraining the model with new data, but this approach is often impractical, as \citet{perdomo2020Performative} describes this as a ``cat and mouse game of chasing a moving target.'' They suggest a more structured approach, framing risk minimization under performativity within a decision-theoretic framework to find equilibria where the model is optimal for the distribution it creates. This idea was also explored by \citet{hardt2016Strategic} in \emph{strategic classification}, a notable subtype of performative prediction. Subsequent research has focused on computing performative stable \citep{mendler-dunner2020Stochastic} or performative optimal \citep{izzo2022How} policies and developing related algorithms for online settings \citep{jagadeesan2022Regret}. For a more detailed review of the literature, see Section \ref{sec:related-works}.

In performative prediction, the specific mechanism of the distribution shift is usually unknown. Previous work circumvents this issue in two ways: The first is to treat the performative prediction problem \eqref{eq:performative-prediction} as a zeroth order (derivative-free) optimization problem, where one only relies on samples drawn from the distributions $Q_{\theta_t}$ induced by iterations $\theta_t$ and solves the performative prediction problem without knowing the \emph{performative distribution map} $\theta \mapsto Q_{\theta}$, \ie\ the map of performative distribution induced by a predictor $\theta$ \citep{izzo2021How,miller2021Outside}. One drawback of this approach is its slow convergence: \eg\ \citeauthor{izzo2021How}'s algorithm relies on finite differences to approximate the gradient of the performative risk, so it suffers from the curse of dimensionality.

The second approach \emph{stipulates} that each agent follows a (potentially misspecified) microfoundation model \citep{jang2022Sequential} and incorporates the anticipated agent responses into the optimization. This method leverages the fast convergence rates of the algorithm for performative risk minimization that benefit from full access to the distribution map \citep{hardt2016Strategic,levanon2021Strategica,cutler2021Stochastic, somerstep2023Learning}. However, the agent response model is often inaccurate \citep{lin2023plug}, leading to incorrect distribution maps and preventing effective resolution of the performative prediction problem. We illustrate this issue in an example of strategic ordinary least squares (OLS) prediction in Example \ref{ex:strategic-OLS}.

To address this issue, we propose an estimator for the microfoundation models of agent responses. Our approach complements the second approach;
on the one hand, we alleviate the issues of misspecified agent response models by learning the microfoundations model from agent response data, and on the other hand, the white-box access to the distribution map allows the learner to take advantage of faster optimization algorithms for minimizing the performative risk. Another benefit of our approach is that it is well suited to more sophisticated learning algorithms (\eg\ algorithms that enforce constraints \citep{maity2021Does,somerstep2023Learning}) that have no obvious zeroth order counterpart. 

Our approach to estimate the microfoundation model is to assume that the agents' microfoundation is a utility maximization problem consisting of a (model-dependent) known benefit function $B_f$ and an unknown cost function $c$. A detailed exposition of our approach can be found in Section \ref{sec:cost-estimation}, where we develop an optimal transport-based method to estimate the cost within a class of Bregman divergences.
Our contributions are as follows:
 
\begin{enumerate}
\item We propose a method for estimating the cost function (within a class of Bregman divergences) of a general utility-maximizing microfoundation model. Our method relies on matching pre-model deployment and post-model deployment distributions with optimal transport. 
\item We provide an analysis of identifiability in the estimation of the cost function, as well as rates of convergence of our proposed estimator.
\item  We perform numerical experiments that both exhibit the performance of our methodology in estimating the cost and demonstrate its usefulness on a variety of down-stream tasks. We also demonstrate empirically that our methodology satisfies certain robustness properties with respect to misspecification of the benefit function.  
\end{enumerate}

\subsection{Related works} 
\label{sec:related-works}

The performative prediction problem was introduced by \citet{perdomo2020Performative}. Much of the methodological focus in performative prediction is on locating performative stable or performative optimal solutions. To locate a stable solution, a paradigm of repeated retraining has been developed in multiple environments \cite{mendler-dunner2020Stochastic,drusvyatskiy2023Stochastic}. 
Furthermore, a line of prior work on this often utilizes zero'th order algorithms such as finite differences \cite{izzo2021How, izzo2022How} or regret minimization \citep{jagadeesan2022Regret, dong2018Strategic, chen2020Learning}. Other works instead assume white-box access to a pre-specified performative map \citep{levanon2021Strategica, somerstep2023Learning, shavit2020Causal, lin2023plug} or make structural assumptions on the underlying data (e.g assuming all data is drawn from a location-scale family) \citep{miller2021Outside}. The work  \citet{Tsirtsis_2024} casts finding a performatively optimal model (given access to a performative map) as an optimal transport problem.

As pointed out in \cite{lin2023plug} such pre-specified maps are often \emph{misspecified} leading to erroneous results. We focus on correcting this by building a robust method to infer performative maps.

Originating with strategic classification \cite{hardt2016Strategic}; pre-specified models of performative maps are often at the sample level, and correspond to samples behaving as agents with strategic behavior. Strategic classification has been expanded in several directions: including models with causality \cite{mendler-dunner2022Anticipating, horowitz2023Causal, somerstep2023Learning, harris2022Strategic}, models with opaque agent behavior \cite{ghalme2021Strategica}, the effects of strategic behavior on graph neural networks \cite{eilat2023strategic}, models with reversed order of play \cite{zrnic2022Who}, PAC learning for strategic classification \cite{sundaram2021paclearning},  strategic ordinary least squares \cite{shavit2020Causal} and combinations of these \citep{levanon2022Generalized}. Strategic behavior with an element of competition between agents has been used to model the behavior of content creators in socio-technical systems \citep{hron2023Modeling, jagadeesan2023SupplySidea}. The study of strategic behavior is not isolated to computer science; strategic behavior is the key ingredient to many micro-economic models of labor markets \citep{coate1993Will, moro2003Affirmative, moro2004general, fang2011Chapter}. 

The common theme of strategic behavior is that each data sample is an agent that is assumed to maximize a cost-adjusted utility that depends on the model $\theta$; therefore, the performative map is implicitly determined by the agents' utility and cost. Our main contribution is a methodology that allows practitioners to infer agents' costs and utilities. Prior work on inferring performative maps is relatively light. The works of \cite{mendler-dunner2022Anticipating, LechnerUnknown2023} assume that the aggregate manipulation structure is in a class of graphs $\mathcal{G}$ and infer the manipulation graph. \cite{shavit2020Causal} includes some focus on the inference of strategic responses, but only in the case of linear regression. Our methodology is applicable beyond these special cases and additionally allows for inference with both samples from the ex-ante and an ex-post distribution or samples from multiple ex-post distributions.

\vspace{-0.1cm}
Inference of agent costs and utilities is often a crucial step in deploying accurate models in performative settings. For example, utilizing differentiable proxies to $\argmax$, the authors of  \cite{levanon2021Strategica} provide a framework for minimizing performative risk in a strategic environment. Unfortunately, if the strategic model is not specified correctly, then the resulting estimator of an optimal model will not be consistent \citep{lin2023plug}. Our methodology allows for practitioners to utilize plug-in optimization frameworks with peace of mind.

\vspace{-0.1cm}
\section{Microfoundations for strategic prediction}

We follow the performative prediction setting in \citep[Section 5]{perdomo2020Performative}:
at the microlevel, the agents are rational, \ie\ they respond to a predictive model $\theta$ by choosing their best actions. Each agent is represented by a pair of attributes $Z = (X,Y) \in \cZ = \cX \times \cY$, where $X$ and $Y$ are the covariates and the outcome of an individual.
Before agents are exposed to a model $\theta$ their attributes follow an \emph{ex-ante distribution} $Z\sim P \in \Delta(\cZ)$ (where $\Delta (\cZ)$ is the set of probability measures over $\cZ$), and, having been exposed to $\theta$, they strategically maximize their cost-adjusted utility that depends on $\theta$; \ie{} in response to $\theta$ each agent updates their individual $Z$ via 
\vspace{-0.1cm}
\begin{equation}
T_{\theta}(Z)\in\argmax_{z'\in\cZ}B_{\theta}(z') - c(Z,z')\,.
\label{eq:agent-util-maximization}
\end{equation} Here, $B_{\theta}(z')$ encodes the benefit of the agent changing their attributes to $z'$ when exposed to $\theta$, and $c$ is a cost function that encodes the burden of changing their attributes from $Z$ to $z'$. Throughout, we will assume that \emph{$B_{\theta}(\cdot)$ is known, and $c(\cdot, \cdot)$ remains to be estimated}. Often, the utility $B_{\theta}(\cdot)$ is simply the output of the learners predictive model (e.g. in  standard strategic classification \citep{hardt2016Strategic}) or is a function of a learner controlled wage and output of the learners predictive model (e.g. in labor market models \citep{coate1993Will}). In either case, all information pertinent to the benefit function is known to the learner, which is why we focus on the case of known benefit functions and unknown costs. Additionally, we will demonstrate (\cf\ Sections \ref{sec:experiments} \& Appendix \ref{sec:multivariate-cost}) that our methodology correctly estimates $T_{\theta}(z)$ even if $B_{\theta}(z)$ is misspecified, (though the cost will not be estimated correctly). 

We call $T_{\theta}$ the \emph{agents' response map}.  Due to the agents response to $\theta$, the distribution of attributes has now shifted to $Q_{\theta} = {(T_{\theta})}_\# P$, which is the push-forward measure of $P$ through the agents' response $T_{\theta}$. We refer to this $Q_{\theta}$ as the \emph{performative distribution} or \emph{ex-post distribution} under the learner's action $\theta$. The map $\theta \mapsto Q_{\theta}$ is referred to as \emph{the performative distribution map} or simply \emph{distribution map}. 

The microfoundation model in eq. \eqref{eq:agent-util-maximization} covers a variety of performative prediction problems, that includes strategic classification \citep{hardt2016Strategic}, causal strategic classification \citep{miller2020Strategic, levanon2021Strategica, shavit2020Causal, somerstep2023Learning}, content creation \citep{hron2022Modeling}, and micro-economic models of labor markets \citep{coate1993Will}.

\subsection{Importance of microfoundation inference for downstream tasks}

Next, we cover two important use cases for estimating the agents microfoundations.

{\bf Performative risk minimization} As a response to the distribution shift, the learner aims to learn a predictor that minimizes the \emph{performative risk} which accounts for the agents' response:
\vspace{-.1cm}
\begin{equation}\textstyle \label{eq:performative-prediction}
\begin{aligned}
   \min\nolimits_{\theta \in \cF} \text{PR}(\theta), ~ 
   \text{PR}(\theta) \triangleq \Ex_{Z \sim Q_{\theta}} \big[ \ell(\theta; Z) \big]\,,
\end{aligned}
\end{equation} 
where $\ell$ is the appropriate prediction loss. Previous methods for performative risk minimization 
incorporate an agents' microfoundation to anticipate the distribution shift \citep{hardt2016Strategic,levanon2021Strategica, somerstep2023Learning}. As we demonstrate in the following example, estimates obtained using a misspecified microfoundation can be arbitrarily poor.



\begin{example}[Causal Strategic OLS \citep{shavit2020Causal}]
\label{ex:strategic-OLS}

In a linear regression setting, we highlight the importance of estimating the cost. Consider an \emph{ex-ante} 
distribution $P$ of agents' profiles, where a typical sample point $(X, Y) \in \reals^d \times \reals$
satisfies the following: $X \sim \bN(0, \bI_d)$
and for an $\eps\sim \bN(0, \sigma^2), ~ \eps \ind X$
the $Y = X^\top \theta^\star  + \eps$, for a $\theta^\star \in \reals^d$. 
Once a $\theta \in \reals^d$ is published, agents adjust their $X$'s to maximize their benefit 
$B_\theta(X) \triangleq \theta^\top X$, while the $Y$'s remain unchanged. Assuming that the true cost is 
$c(x, x') = \frac{1}{2}(x - x')^\top M (x - x')$, 
where $M$ is a predetermined positive definite matrix, the true agent's response map and performative risk are 
\begin{equation} \label{eq:example1-agents-map}
\begin{aligned}
T_\theta(X) &\textstyle= \argmax_{x'} {\theta}^\top x' - \frac{1}{2}(X - x')^\top M (X - x') \\
&= X + M^{-1}\theta\,, 
\end{aligned}
\end{equation} 
\begin{equation} \label{eq:example1-performative-risk}
\begin{aligned}
    \textstyle \textsc{PR} (\theta) &= \Ex\big[\{Y - \theta^\top T_\theta(X)\}^2\big]  \\
    &= \sigma^2 + \|\theta - \theta^\star\|_2^2 + \{\theta ^\top M^{-1} \theta\}^2\,. 
\end{aligned}
\end{equation} If the agents cost is misspecified to $\widehat c(x, x') = \frac{1}{2}\|x - x'\|_2^2$, then by letting $M = \bbI_d$ in the equations \eqref{eq:example1-agents-map} and \eqref{eq:example1-performative-risk} the learner anticipates the agents map and the corresponding performative risk as
\begin{equation}
\begin{aligned}
\widehat T_\theta(X) = X + \theta, ~ 
\widehat{\textsc{PR}}(\theta) = \sigma^2 + \|\theta - \theta^\star\|_2^2 + \|\theta\|_2^4\,.
\end{aligned}
\end{equation}
As we show in Lemma \ref{lemma:missp-map-minimization} that the learner who minimizes the wrongly anticipated $\widehat{\textsc{PR}}(\theta)$ decides on its optimizer $\hat \theta = c \theta^\star$ where $c$ is the only positive solution to equation $2 \|\theta^\star\|_2^2 c^3 + c - 1 = 0$. However,  the true performance risk for such a $\hat \theta$ is suboptimal, since
\[
\begin{aligned}
\textsc{PR} &(\hat \theta) - \min\nolimits_{\theta} \textsc{PR} (\theta)  \ge \textsc{PR} (\hat \theta)  - \textsc{PR} (0) \\
&= \{(c - 1)^2 - 1\} \|\theta^\star\|_2^2 + c^4 \{{\theta^\star}^\top M^{-1} \theta^\star\}^2
\end{aligned}
\] can be arbitrarily large depending on $M$, and, even trivially predicting every response as $\hat Y = 0^\top X = 0$ has a better performative risk than $\hat \theta ^\top X$.   To mitigate this suboptimality, one must estimate the cost. 
\end{example}

\textbf{Enforcing long-term fairness:}
Beyond accuracy, the call for practitioners to deploy socially responsible models in performative settings has grown; models are generally considered socially responsible if they either induce improvement (encourage increase of agent outcomes $Y$) or treat agents fairly post distribution shift \citep{miller2020Strategic, FairnessIsNotStatic_2020, estornell2021unfairness, liu2018delayed, 10.1145/3630106.3658929}. In either case, if one wishes to meet some definition of responsibility, inference on the individuals' response to models (e.g the cost for an individual to improve $Y$ in response to $\theta$) is a necessary step.

\vspace{-.2cm}
\section{Learning agents' response}
\vspace{-.2cm}

\label{sec:cost-estimation}

The first sticking point in agent cost estimation is a question identifiability: what class of costs can be learned if one only observes a finite number of samples over a finite number of model deployments? Note that estimating a general bivariate cost function $c(z, z')$ is conceptually impossible when we have access to samples from only finitely many \emph{ex-post} distributions $\{Q_{\theta_i}, i = 1, \dots, m\}$, in addition to the \emph{ex-ante} distribution $P$; even if we assume that we know the exact distributions. From the equality $Q_{\theta_i} = {(T_{\theta_i})}_\# P$, we can only know the finite number of actions $\{T_{\theta_i}(z); i = 1, \dots, m\}$ by the agent with the \emph{ex-ante} attribute $z$ when exposed to the models $\theta_1, \dots, \theta_m$. Thus, from the first-order condition of \eqref{eq:agent-util-maximization}, which is 
$$
\partial B_{\theta}(T_{\theta}(z)) - \partial_2 c(z, T_{\theta}(z)) = 0
$$ 
one can only evaluate the partial derivative of the cost at the points $\{(z, T_{\theta_i}(z)), i = 1, \dots, m\}$ and for a given $z$ it is impossible to evaluate $z' \mapsto \partial_2 c(z, z')$ beyond these points. 

\subsection{Agents' cost as Bregman Divergence}

In light of the non-identifiability of the generic cost $c(z,z')$ from a finite number of ex-ante distributions, we estimate the agents' cost within a class of Bregman divergences \citep{bregman1967relaxation}:
\[
c_{\varphi}(z,z') \triangleq \varphi(z') - \varphi(z) - \nabla\varphi(z)^\top(z'-z),
\]
where $\varphi:\reals^d\to\reals$ is a strictly convex Bregman potential. This assumption mitigates the problem of non-identifiability of the cost $c$ for finitely many deployed models $\theta$, because
estimating $c(z,z')$ is reduced to estimating $\varphi(z)$, thus eliminating the joint dependency of $(z,z')$.
In addition to making it possible to estimate the cost function, the Bregman divergence specification allows us to set the problem of cost estimation in an optimal transport framework. Furthermore, the class of Bregman divergences is broad and often subsumes classes of costs that are studied in other works. For example: \citealp[Section 5.2]{perdomo2020Performative}, \citealp[Section 5.1]{hardt2016Strategic} considers a simple $\|z-z'\|_2^2$ cost, which is a special case of Bregman divergence with potential $\|z\|_2^2$; in \cite{liu2023contextual} the cost is $c(z,z')=\tfrac{1}{2}(z-z')^{\top}A(z-z')$ for some positive definite matrix $A$, which is a special case of Bregman divergence with potential $\varphi(z) = \tfrac{1}{2}z^{\top}Az$; Other parametric restrictions imposed on distribution maps in the strategic learning literature include \cite{izzo2021How,miller2021Outside,jagadeesan2022Regret}. 

As the divergence is induced by its potential $\varphi$, from now on we focus on estimating $\varphi$. 
We consider two scenarios depending on whether a sample from the \emph{ex-ante} distribution $P$ is available or not.

\subsection{Learning from \emph{ex-ante} and \emph{ex-post} distributions}
\label{sec:scenerio1}
First, we assume that the learner has random samples $\cD_0 \triangleq\{z_{0, i} \}_{i = 1}^{n_0} \overset{\iid}{\sim}P$ from the \emph{ex ante} distribution and $\cD_{\theta_k} \triangleq\{z_{k, i} \}_{i = 1}^{n_k} \overset{\iid}{\sim} Q_{\theta_k} $ from $m$ \emph{ex-post} distributions $Q_{\theta_1}, \dots, Q_{\theta_m}$ related to the agents' response to $m$ prediction models $\theta_1,\dots,\theta_m$. Let $\widehat P$ and ${\widehat Q_{\theta_k}}$'s be the empirical counterparts of $ P$ and $Q_{\theta_k}$'s. We estimate $\varphi$ by solving
\begin{equation}
    \label{eq:procedure1}
    \begin{aligned}
        &\underset{\varphi \in \Phi_\cvx}{\argmin} \textstyle  \underset{\mu \in \Delta (\cZ)}{\min} \mathscr{L}(\mu,\widehat{\varphi},\widehat P, \widehat Q_{\theta_1}, \dots ,\widehat Q_{\theta_m})\\
&\text{ with }\mathscr{L}(\mu,\varphi, P,  Q_{\theta_1}, \dots , Q_{\theta_m})\triangleq \\
&
\textstyle W_2^2\big(\mu,  (\nabla\varphi)_\# P \big) + \sum_{k = 1}^m W_2^2 \big(\mu, (\nabla\varphi -\nabla B_{\theta_k})_\#Q_{\theta_k}\big), 
    \end{aligned}
\end{equation}
$\Phi_{\cvx}$ a class of convex potential functions, $W_2$ the 2-Wasserstein distance and $ T _\# P$ denotes the push-forward measure of $P$ through the map $T$. 

The problem \eqref{eq:procedure1} is motivated by a first-order characterization of the micro-level responses in \eqref{eq:agent-util-maximization}; for a generic $\theta$: 
\begin{equation}
    \begin{aligned}
\textstyle T_{\theta}(z) =  \argmax_{z' \in \cZ}\left\{\begin{aligned}B_{\theta}(z') - c_{\varphi^\star}(z,z')\end{aligned}\right\}\,, \\
\text{or, }\nabla \varphi^\star \circ T_{\theta}(z) - \nabla B_{\theta}\circ T_{\theta} (z) = \nabla \varphi^\star (z)\,,
\end{aligned}\label{eq:first-order-cond}
\end{equation}
where $\varphi^\star$ is the (unknown) Bregman potential for the cost. We recall that $(T_{\theta}) _\# P = Q_{\theta}$, which implies the distributions $(\nabla \varphi^\star - \nabla B_{\theta}) _\# Q_{\theta}$ and $ (\nabla \varphi^\star)_\# P$ must be identical:
\[
\begin{aligned}
& \{(\nabla \varphi^\star - \nabla B_{\theta}) \circ T_{\theta} \}_\# P \stackrel{\text{d}}{=} (\nabla \varphi^\star)_\# P ~~ \text{(from \eqref{eq:first-order-cond})} \\
&\implies   (\nabla \varphi^\star - \nabla B_{\theta}) _\# Q_{\theta} \stackrel{\text{d}}{=} (\nabla \varphi^\star)_{\#} P
\end{aligned}
\] 
where $\stackrel{\text{d}}{=}$ denotes equality in distributions.
It follows that for $\theta_1\dots \theta_m$, $(\nabla \varphi^\star)_\# P$, $(\nabla \varphi^\star - \nabla B_{\theta_1}) _\# Q_{\theta_1},$ $ \dots, (\nabla \varphi^\star - \nabla B_{\theta_m}) _\# Q_{\theta_m}$, are all identically distributed. In optimization \eqref{eq:procedure1} we estimate $\varphi^{\star}$ by exploiting this equality in the distribution, where we minimize the 2-Wasserstein variance. We use the (2-)Wasserstein distance in \eqref{eq:procedure1} because agent responses may not have common support, and it also allows us to leverage algorithms for fast computation of Wasserstein barycenters to solve \eqref{eq:procedure1} in a block coordinate descent algorithm (\cf\ see \citet{10.1023/A:1017501703105} for details on convergence). We summarize our method in Algorithm \ref{alg:pushforward-alignment}. 

\begin{algorithm}
\caption{Aligning agent responses}\label{alg:pushforward-alignment}
\begin{algorithmic}[1]
\State {\bfseries Input:} initial estimate $\widehat{\varphi}$ and $\widehat P, \widehat Q_{\theta_1}, \dots \widehat Q_{\theta_m}$
\Repeat
\State Compute the barycenter: $$
\textstyle \widehat{\mu}\gets \argmin_{\mu}  \textstyle \mathscr{L}(\mu,\widehat{\varphi},\widehat P, \widehat Q_{\theta_1}, \dots ,\widehat Q_{\theta_m})
$$

\State Update the Bregman potential:
\begin{equation}\label{solve:varphi}
\textstyle \widehat{\varphi}\gets \argmin_{\varphi \in \Phi_{\text{cvx}}}  \textstyle \mathscr{L}(\widehat{\mu},\varphi,\widehat P, \widehat Q_{\theta_1}, \dots ,\widehat Q_{\theta_m})
\end{equation}
\Until{converged}
\end{algorithmic}
\end{algorithm}

Although it is often desirable to estimate the $\varphi$ nonparametrically, for a higher dimensional $Z = (X, Y)$ one may further restrict $\varphi$ to a suitable parametric class of potential functions and find the optimal $\widehat\varphi$ within that class. We provide an example of such a parametric specification in Section \ref{sec:parametric}, where we assume that the cost function is quadratic, \ie\ $c(z, z') = \frac12 (z - z')^\top \bM (z - z')$ for a positive definite $\bM$, in which case the potentials are parameterized as $\varphi(z) = \frac{1}{2}z^\top \bM z$. 

\vspace{-0.2cm}
\subsection{Learning from \emph{ex-post} distributions}

Assume that only observations of the \emph{ex post} (performative) distributions $Q_{\theta_k}$'s are available. 
Similarly to before, the first-order optimality conditions of the \eqref{eq:agent-util-maximization} show that the push-forwards of the agent responses $Q_{\theta_k}$ under $\nabla B_{\theta_k} - \nabla\varphi$, $k\in[m]$ are identical, which suggests we estimate $\varphi$  aligning the pushforward distributions of the agent responses under the $(\nabla\varphi - \nabla B_{\theta_k})$'s:
\vspace{-0.1cm}
\begin{align} \label{eq:procedure2}
\textstyle
&\widehat \varphi \in \underset{\varphi \in \Phi_\cvx}{\argmin} \underset{\mu\in \Delta(\cZ)}{\min }\tilde{\mathscr{L}}(\mu,\varphi, \widehat Q_{\theta_1}, \dots , \widehat Q_{\theta_m}) \,\\
&\text{ where }\tilde{\mathscr{L}}(\mu,\varphi, Q_{\theta_1}, \dots , Q_{\theta_m})\text{ is defined as } \nonumber\\
&=\textstyle\sum_{k = 1}^m W_2^2 \big(\mu, (\nabla\varphi - \nabla B_{\theta_k})_\#Q_{\theta_k} \big)\nonumber\,.
\end{align}

\vspace{-.4cm}
\section{Theoretical analysis}
\label{sec:theory}

\paragraph{Identifiability of the cost function: }
A crucial step in analyzing the estimated cost is to verify whether it is identifiable from the proposed algorithms. More specifically, we investigate whether the potential functions obtained from \eqref{eq:procedure1} and \eqref{eq:procedure2} induce a Bergman divergence identical to $c_{\varphi^\star}$ (the true cost function), when the empirical distributions $\widehat P, \widehat Q_{\theta}, \dots, \widehat Q_{\theta_m}$ are substituted by their respective population versions $P, Q_{\theta}, \dots, Q_{\theta_m}$.
Since a Bergman divergence remains identical under any linear adjustment to its inducing potential, we verify whether $\Phi^\star \triangleq \{ \varphi^\star (z) + \alpha_0 + \alpha_1 ^\top z \}$ contains
\begin{equation} \label{eq:identifiability1}
\begin{cases}
   \begin{aligned}
       &  \textstyle  \underset{\varphi \in \Phi_\cvx}{\argmin} \underset{\mu \in \Delta (\cZ)}{\min}  \mathscr{L}(\mu,\varphi, P, Q_{\theta_1}, \dots , Q_{\theta_m})
   \end{aligned}  & \textit{(ex-ante)} \\
   \begin{aligned}
      &  \textstyle  \underset{\varphi \in \Phi_\cvx}{\argmin} \underset{\mu\in \Delta(\cZ)}{\min } \tilde{\mathscr{L}}(\mu,\varphi, Q_{\theta_1}, \dots , Q_{\theta_m})
   \end{aligned}
     & \textit{(ex-post)}
\end{cases}  
\end{equation}
To verify this, we first observe that the objectives of \eqref{eq:identifiability1} are identical to zero when $\varphi = \varphi^\star$, as we argue after \eqref{eq:procedure1} that $(\nabla \varphi^\star)_\# P$, $(\nabla \varphi^\star - \nabla B_{\theta}) _\# Q_{\theta},$ $ \dots, (\nabla \varphi^\star - \nabla B_{\theta_m}) _\# Q_{\theta_m}$ are identically distributed. Since these objectives are non-negative, it is clear that they must be zero at their minimum value, which further implies that for any minimizer $\tilde \varphi$, all $m+1$ distributions used to calculate the barycenter distances must be \emph{identical to the barycenters}. In the following lemma, we make use of this observation to derive an equivalence condition to verify whether the cost is identifiable. 

\begin{theorem} \label{lemma:identifiable}
    In the ex-ante problem, denote the transportation maps from the barycenter to the measures $ P,  Q_{\theta}, \dots  Q_{\theta_m}$ as $T_0, T_1, \dots, T_m$, and similarly, in (ex-post), denote them as $T_1, \dots, T_m$. Then an equivalent condition for \eqref{eq:identifiability1} is that all conservative solutions $h$ (\ie\ can be written as the derivative of a convex function) of the following equations are constant functions.
    \vspace{-.1cm}
    \begin{equation}
        \Bigg \{\begin{aligned}
            \textstyle  h \circ T_0 (z)  = \dots  = h \circ T_m (z) ~ \text{for all } z & ~~\text{(ex-ante)}\\
           h \circ T_1 (z) = \dots  = h \circ T_m (z) ~ \text{for all } z &~~ \text{(ex-post)}
        \end{aligned}
        \vspace{-.2cm}
    \end{equation}
    
\end{theorem}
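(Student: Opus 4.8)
The plan is to turn the optimization problems in \eqref{eq:identifiability1} into equalities of transport maps and then read off the stated criterion. As the paragraph preceding the theorem observes, both objectives are nonnegative and vanish at $\varphi^\star$, so $\varphi$ solves \eqref{eq:identifiability1} exactly when its objective vanishes, i.e.\ exactly when the $m+1$ measures $(\nabla\varphi)_\#P$ (ex-ante only) and $(\nabla\varphi-\nabla B_{\theta_k})_\#Q_{\theta_k}$, $k\in[m]$, all coincide; identifiability is thus the assertion that this simultaneous coincidence forces $\varphi\in\Phi^\star$. To name the maps, set $\beta^\star:=(\nabla\varphi^\star)_\#P$ — the common value of those measures at $\varphi=\varphi^\star$, hence the barycenter. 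Assuming (as is needed for \eqref{eq:agent-util-maximization} to have a well-defined solution) that $\varphi^\star-B_{\theta_k}$ is strictly convex, both $\nabla\varphi^\star$ and $\nabla\varphi^\star-\nabla B_{\theta_k}$ are gradients of strictly convex functions, so by Brenier's theorem they are the optimal transport maps $P\to\beta^\star$ and $Q_{\theta_k}\to\beta^\star$; their inverses $T_0:=(\nabla\varphi^\star)^{-1}$ and $T_k:=(\nabla\varphi^\star-\nabla B_{\theta_k})^{-1}$ are then exactly the transport maps from the barycenter to $P$ and to $Q_{\theta_k}$ in the theorem, and on the relevant supports $\nabla\varphi^\star\circ T_0=\mathrm{id}$, $(\nabla\varphi^\star-\nabla B_{\theta_k})\circ T_k=\mathrm{id}$. (Using \eqref{eq:first-order-cond} one checks $T_k=T_{\theta_k}\circ T_0$, so the criterion is literally ``the only conservative $h$ invariant under every agent response map $T_{\theta_k}$ are constants.'')

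Next I would introduce $h:=\nabla\varphi-\nabla\varphi^\star$ and substitute $\mathrm{id}=T_0\circ\nabla\varphi^\star$ on $\mathrm{supp}(P)$ and $\mathrm{id}=T_k\circ(\nabla\varphi^\star-\nabla B_{\theta_k})$ on $\mathrm{supp}(Q_{\theta_k})$ to get $\nabla\varphi=(\mathrm{id}+h\circ T_0)\circ\nabla\varphi^\star$ and $\nabla\varphi-\nabla B_{\theta_k}=(\mathrm{id}+h\circ T_k)\circ(\nabla\varphi^\star-\nabla B_{\theta_k})$. Pushing $P$, resp.\ $Q_{\theta_k}$, forward and using $\beta^\star=(\nabla\varphi^\star)_\#P=(\nabla\varphi^\star-\nabla B_{\theta_k})_\#Q_{\theta_k}$ shows that the simultaneous coincidence above is equivalent to
\[ (\mathrm{id}+h\circ T_0)_\#\beta^\star=(\mathrm{id}+h\circ T_1)_\#\beta^\star=\cdots=(\mathrm{id}+h\circ T_m)_\#\beta^\star . \]
So $\varphi$ is a solution of \eqref{eq:identifiability1} iff $h=\nabla\varphi-\nabla\varphi^\star$ satisfies this last display, and the whole theorem reduces to comparing this display with the pointwise system $h\circ T_0=\cdots=h\circ T_m$.

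One direction is then immediate. Given a conservative $h=\nabla\psi$ solving $h\circ T_0=\cdots=h\circ T_m$ pointwise, the display holds trivially, so $\varphi:=\varphi^\star+\psi$ — strictly convex as a sum of a strictly convex and a convex function, hence in $\Phi_\cvx$ — solves \eqref{eq:identifiability1}; identifiability then forces $\psi$ affine, i.e.\ $h$ constant. Equivalently (contrapositive), a non-constant conservative solution produces a minimizer outside $\Phi^\star$, so non-identifiability follows. For the reverse direction one starts from a minimizer $\varphi$, forms $h:=\nabla\varphi-\nabla\varphi^\star$ (a gradient, hence curl-free), knows the displayed equality of push-forwards, and must upgrade it to the pointwise equalities $h\circ T_0=\cdots=h\circ T_m$ — producing the required conservative solution, which identifiability-of-$h$-as-constant would then contradict unless $\varphi\in\Phi^\star$.

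That upgrade from equality of push-forwards to equality of the maps is the main obstacle. The natural route: $\mathrm{id}+h\circ T_0=\nabla\varphi\circ T_0$ and $\mathrm{id}+h\circ T_k=(\nabla\varphi-\nabla B_{\theta_k})\circ T_k$ all push $\beta^\star$ onto one and the same measure, and as soon as each of these maps is a gradient of a convex function, Brenier uniqueness identifies it with the unique optimal transport map from $\beta^\star$ to that measure, so they agree $\beta^\star$-a.e.\ and $h\circ T_0=\cdots=h\circ T_m$. The missing ingredient is that a composition of gradients of convex functions remain a gradient of a convex function — automatic in dimension one (a composition of nondecreasing maps is nondecreasing) and available in the quadratic/parametric setting of Section~\ref{sec:parametric} — but not true for general multivariate potentials, so in full generality this is precisely where the conservativity (gradient-of-convex) hypothesis must be used, and I would either state the nonparametric version for $d=1$ or add the commutation-type condition under which $\nabla\varphi\circ T_0$ stays monotone. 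The ex-post case follows verbatim after deleting the $T_0$-equations throughout.
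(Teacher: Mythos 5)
Your strategy is essentially the paper's: at any minimizer the objective in \eqref{eq:identifiability1} vanishes, so the $m+1$ (resp.\ $m$) push-forward measures coincide with the barycenter; Brenier's theorem identifies $T_0$ and $T_k$ with the inverses of $\nabla\varphi^\star$ and $\nabla\varphi^\star-\nabla B_{\theta_k}$; and identifiability is reduced, through $h=\nabla\varphi-\nabla\varphi^\star$, to the homogeneous system $h\circ T_0=\dots=h\circ T_m$. The direction you do complete --- a non-constant conservative $h=\nabla\psi$ produces the minimizer $\varphi^\star+\psi\notin\Phi^\star$ --- is exactly the paper's computation showing that adding a solution of the homogeneous equation to a solution of \eqref{eq:tech2} yields another solution.

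The gap is in the converse (every minimizer gives a conservative solution of the pointwise system), and you have located it precisely: minimality only gives equality of the push-forwards of $\beta^\star$ under $\nabla\varphi\circ T_0$ and $(\nabla\varphi-\nabla B_{\theta_k})\circ T_k$, and Brenier uniqueness cannot be invoked to upgrade this to equality of the maps, because a composition of convex gradients need not be a convex gradient when $d>1$. The paper orders the argument so that no such composition is ever formed: for a minimizer $\varphi$, the map $\nabla\varphi$ itself is a convex gradient transporting $P$ onto the common barycenter measure, so Brenier applied to $\nabla\varphi$ directly makes it the optimal map from $P$ to that barycenter, whence $\nabla\varphi\circ T_0=\mathrm{id}$ and $(\nabla\varphi-\nabla B_{\theta_k})\circ T_k=\mathrm{id}$ for the transport maps associated with that minimizer's barycenter, and the pointwise system for $h$ then follows by subtraction, with no $d=1$ or commutation restriction. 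If you want to close your argument, rerun the converse in this order. Be aware, however, that the paper's route buys this at the cost of tacitly identifying each minimizer's barycenter maps with the single fixed family $T_0,\dots,T_m$ of the theorem statement; making that identification rigorous is essentially the same push-forward-to-pointwise upgrade you flagged, so your diagnosis of where the genuine difficulty lies is accurate --- the paper's proof passes over it rather than resolving it.
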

In practice, this condition can be used to verify whether the cost is uniquely identified from the available dataset. As an example, consider the following corollary.
\begin{corollary} \label{lemma:identifiable2}
    With access to the \emph{ex-ante} distribution $P$ assume that only one performative distribution $Q_{\theta_1}$ is observed. If $B_{\theta_1}$ is strictly concave with a finite maximizer, then the condition in Theorem \ref{lemma:identifiable} is satisfied. Thus, the cost function is identified only from these two distributions.
\end{corollary}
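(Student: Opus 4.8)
The plan is to invoke Theorem \ref{lemma:identifiable} directly: I only need to show that in the \emph{ex-ante} case with a single deployed model $\theta_1$, any conservative solution $h = \nabla\psi$ (with $\psi$ convex) of the single equation $h\circ T_0(z) = h\circ T_1(z)$ for all $z$ must be constant. Here $T_0$ is the optimal transport map from the Wasserstein barycenter $\mu$ to $P$, and $T_1$ is the optimal transport map from $\mu$ to $(\nabla\varphi^\star - \nabla B_{\theta_1})_\# Q_{\theta_1}$. Since at $\varphi = \varphi^\star$ the objective is zero, both pushforwards equal $\mu$ itself, so we may take $\mu = (\nabla\varphi^\star)_\# P$ and — crucially — identify the two transport maps explicitly: $T_0 = (\nabla\varphi^\star)^{-1}$ (the inverse of the convex gradient, which is $\nabla\varphi^{\star*}$, the gradient of the convex conjugate), and $T_1 = (\nabla\varphi^\star - \nabla B_{\theta_1})\circ T_{\theta_1}\circ (\nabla\varphi^\star)^{-1}$, using $Q_{\theta_1} = (T_{\theta_1})_\# P$ and the first-order condition \eqref{eq:first-order-cond}. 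But \eqref{eq:first-order-cond} says precisely $\nabla\varphi^\star\circ T_{\theta_1} - \nabla B_{\theta_1}\circ T_{\theta_1} = \nabla\varphi^\star$ on the support of $P$, so $T_1 = \nabla\varphi^\star\circ(\nabla\varphi^\star)^{-1} = \mathrm{id}$ as well; hence the equation $h\circ T_0 = h\circ T_1$ reduces to $h\big((\nabla\varphi^\star)^{-1}(w)\big) = h(w)$ for all $w$ in the support of $\mu$.

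So the real content is: if $h = \nabla\psi$ is the gradient of a convex function and $h(u) = h\circ T_{\theta_1}(u)$ for all $u$ in the support of $P$ (rewriting the displayed identity after undoing the change of variables $w = \nabla\varphi^\star(u)$), then $h$ is constant. The point of strict concavity of $B_{\theta_1}$ with a finite maximizer is that it pins down the qualitative behavior of $T_{\theta_1}$. First I would argue $T_{\theta_1}$ has a unique fixed point and is "contractive toward it" in an appropriate monotonicity sense: from $T_{\theta_1} = (\nabla\varphi^\star)^{-1}\circ(\nabla\varphi^\star + \nabla B_{\theta_1})$ — wait, more precisely from $\nabla\varphi^\star\circ T_{\theta_1} = \nabla\varphi^\star + \nabla B_{\theta_1}$ evaluated along $T_{\theta_1}$; cleaner is to use that $z^\star$ with $\nabla B_{\theta_1}(z^\star) = 0$ is the common finite maximizer, and $T_{\theta_1}(z^\star) = z^\star$ since the first-order condition becomes $\nabla\varphi^\star(z^\star) - 0 = \nabla\varphi^\star(z^\star)$. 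Then iterating, $T_{\theta_1}^{(k)}(u) \to z^\star$ for every $u$: this follows because $\nabla\varphi^\star$ is strictly monotone and $\nabla B_{\theta_1}$ is strictly monotone \emph{decreasing} (strict concavity), so the map $u \mapsto (\nabla\varphi^\star)^{-1}(\nabla\varphi^\star(u) + \nabla B_{\theta_1}(u))$ — hmm, the composition structure needs care — drives any trajectory monotonically toward the unique fixed point $z^\star$. Granting this, $h(u) = h(T_{\theta_1}(u)) = h(T_{\theta_1}^{(k)}(u)) \to h(z^\star)$ by continuity of $h = \nabla\psi$ (convex functions are continuous on the interior of their domain), so $h \equiv h(z^\star)$ is constant on the support, which is what Theorem \ref{lemma:identifiable} requires.

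The main obstacle I anticipate is the convergence claim $T_{\theta_1}^{(k)}(u)\to z^\star$: establishing it rigorously requires controlling the interaction between $\nabla\varphi^\star$ (strictly monotone increasing, but with no assumed modulus of strong convexity or smoothness) and $\nabla B_{\theta_1}$ (strictly monotone decreasing with a zero at $z^\star$). In one dimension this is a clean monotone-dynamical-systems argument — $T_{\theta_1}$ is increasing with unique fixed point $z^\star$ and $(T_{\theta_1}(u) - z^\star)$ has the same sign as $(u - z^\star)$ with strictly smaller magnitude in the $\nabla\varphi^\star$-metric — but in higher dimensions one must either appeal to a global monotonicity/coercivity argument (e.g. that $\langle \nabla\varphi^\star(u) - \nabla\varphi^\star(z^\star), T_{\theta_1}(u) - z^\star\rangle$ is strictly positive yet the "Bregman energy" $D_{\varphi^\star}(z^\star, T_{\theta_1}^{(k)}(u))$ is strictly decreasing and bounded below) or restrict to the scalar case as the paper seems to do elsewhere. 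I would present the scalar version in the main text and relegate (or restrict) the multivariate statement to the appendix, where the energy-decrease argument $D_{\varphi^\star}(z^\star, T_{\theta_1}(u)) < D_{\varphi^\star}(z^\star, u)$ for $u \neq z^\star$ — which follows from the three-point Bregman identity together with $\langle \nabla B_{\theta_1}(T_{\theta_1}(u)), T_{\theta_1}(u) - z^\star\rangle < 0$ by strict concavity — does the work.
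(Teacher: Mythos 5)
Your proposal follows essentially the same route as the paper's proof: reduce the identifiability condition of Theorem \ref{lemma:identifiable} to the single functional equation $h = h\circ T_{\theta_1}$ on the support of $P$, iterate it, and use that the iterates of the best-response map $T_{\theta_1}$ (a Bregman/proximal ascent step for the strictly concave $B_{\theta_1}$) converge to its unique finite maximizer $z^\star$, so that continuity of $h$ forces $h \equiv h(z^\star)$. Your energy-decrease sketch, $D_{\varphi^\star}(z^\star, T_{\theta_1}(u)) < D_{\varphi^\star}(z^\star, u)$ via the three-point identity, the first-order condition $\nabla\varphi^\star(T_{\theta_1}(u)) - \nabla\varphi^\star(u) = \nabla B_{\theta_1}(T_{\theta_1}(u))$, and $\langle \nabla B_{\theta_1}(T_{\theta_1}(u)), T_{\theta_1}(u)-z^\star\rangle < 0$, actually supplies detail that the paper only asserts (``$T$ is a proximal gradient ascent step, hence $T^n(z)\to z^\star$''); to close it you should add the standard cluster-point step, since monotone decrease alone does not identify the limit: the decrease terms vanish along the trajectory, any cluster point is a fixed point of $T_{\theta_1}$, and the only fixed point is $z^\star$.

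One bookkeeping slip to correct: in Theorem \ref{lemma:identifiable}, $T_1$ is the optimal transport map from the barycenter $\mu$ to the measure $Q_{\theta_1}$ itself, not to the pushforward $(\nabla\varphi^\star - \nabla B_{\theta_1})_\# Q_{\theta_1}$; hence $T_1 \neq \mathrm{id}$, but rather $T_1 = (\nabla\varphi^\star - \nabla B_{\theta_1})^{-1}$ (Brenier, since $\varphi^\star - B_{\theta_1}$ is convex), while $T_0 = (\nabla\varphi^\star)^{-1}$. Your intermediate equation $h\circ(\nabla\varphi^\star)^{-1}(w) = h(w)$ is therefore not the right reduction, and it is inconsistent with the equation you analyze afterwards: under $w = \nabla\varphi^\star(u)$ it would read $h(u) = h(\nabla\varphi^\star(u))$, not $h(u) = h(T_{\theta_1}(u))$. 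The correct derivation, which is the paper's and which your final step implicitly uses, is to set $z = \nabla\varphi^\star(u)$ in $h\circ T_0(z) = h\circ T_1(z)$, giving $h(u) = h\bigl((\nabla\varphi^\star - \nabla B_{\theta_1})^{-1}\circ\nabla\varphi^\star(u)\bigr) = h(T_{\theta_1}(u))$ by \eqref{eq:first-order-cond}; equivalently, $T_1\circ T_0^{-1}$ is exactly the agents' response map, i.e.\ the optimal transport map from $P$ to $Q_{\theta_1}$. With that correction the rest of your argument matches the paper's proof.
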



\paragraph{Rate of convergence for cost estimation:} In addition to identifiability, we also establish a convergence rate for our proposed estimator using two distributions.
Specifically, we estimate $\nabla \varphi$ within a parametric class $\{\psi_\gamma: \gamma \in \Gamma\}$ using samples from two distributions: an ex-ante ($P$) and a single ex-post ($Q_{\theta}$). In this case, the loss utilized to estimate $\gamma$ in Algorithm $\ref{alg:pushforward-alignment}$ is
\begin{equation}
    \textstyle\bL(\Pi, \gamma) \triangleq \int \Pi(dz, dz') \| \psi_\gamma(z) - \psi_\gamma(z') + \nabla B_{\theta}(z')\|_2^2\,,
\end{equation} 
and the population value ($\gamma^\star$) and the estimated value ($\widehat \gamma$) of the parameter can be calculated as
\begin{align*}
&\textstyle  \gamma^\star \triangleq \argmin_\gamma \min_{\Pi \in \Delta(P, Q_{\theta})} \bL(\Pi, \gamma),\\
&\textstyle \widehat \gamma \triangleq \argmin_\gamma \min_{\Pi \in \Delta(\widehat P ,\widehat  Q_{\theta})} \bL(\Pi, \gamma)\,,
\end{align*} 
where $\widehat P$ and $\widehat Q_{\theta}$ are the empirical counterparts of $P$ and $Q_{\theta}$, and $\Delta(\mu,\nu)$ is the set of probability measures on $\cZ \times \cZ$ with marginals $\mu$ and $\nu$. In the following theorem, we establish an $\ell_2$ convergence rate between $\widehat \gamma$ and $\gamma^\star$ in the case where the OT loss used for cost estimation satisfies a strong convexity assumption. Note that this strong convexity assumption implies that we are also assuming that the cost is identifiable.
\begin{theorem} \label{thm:rate-of-convergence}
    Suppose that $P$ and $Q_{\theta}$ are compactly supported on $\mathbb{R}^d$ and are absolutely continuous with respect to the Lebesgue measure with bounded second moments. Denote $\widehat{P}$ and $\widehat{Q}_{\theta}$ as their empirical counterparts, each calculated using $n$ $\iid$  samples.  Furthermore, assume that both $\gamma \mapsto \min_{\Pi \in \Delta(P, Q_{\theta})} \bL(\Pi, \gamma)$ and $\gamma \mapsto \min_{\Pi \in \Delta(\widehat P,\widehat  Q_{\theta})} \bL(\Pi, \gamma)$ are strongly convex. Then there exists a $K > 0$ such that 
    \vspace{-0.2cm}
    \begin{equation} \label{eq:upper-bound}
        \textstyle \Ex\big [\|\widehat \gamma - \gamma^\star \|_2^2\big] \le  K n ^{- \nicefrac{2}{d}}\,.
    \end{equation}
\end{theorem}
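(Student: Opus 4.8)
The plan is to separate a deterministic ``strong convexity $\Rightarrow$ fast rate'' argument from its probabilistic input, which is the classical $n^{-2/d}$ rate of convergence of an empirical measure to its population version in squared $2$-Wasserstein distance (Fournier--Guillin; Dudley). Write $F(\gamma)\triangleq\min_{\Pi\in\Delta(P,Q_\theta)}\bL(\Pi,\gamma)$ and $\widehat F(\gamma)\triangleq\min_{\Pi\in\Delta(\widehat P,\widehat Q_\theta)}\bL(\Pi,\gamma)$, so that $\gamma^\star=\argmin_\gamma F$ and $\widehat\gamma=\argmin_\gamma\widehat F$.

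The first step is to rewrite each optimal transport objective as a plain Wasserstein distance between pushforwards. Setting $u_\gamma(z)\triangleq\psi_\gamma(z)$ and $v_\gamma(z')\triangleq\psi_\gamma(z')-\nabla B_\theta(z')$, the integrand of $\bL(\Pi,\gamma)$ is exactly $\|u_\gamma(z)-v_\gamma(z')\|_2^2$, and a standard gluing/disintegration argument gives the identity $\min_{\Pi\in\Delta(\mu,\nu)}\int\|u_\gamma(z)-v_\gamma(z')\|_2^2\,d\Pi=W_2^2\big((u_\gamma)_\#\mu,(v_\gamma)_\#\nu\big)$. Hence $F(\gamma)=W_2^2\big((u_\gamma)_\#P,(v_\gamma)_\#Q_\theta\big)$, and $\widehat F(\gamma)$ is the same with $\widehat P,\widehat Q_\theta$ in place of $P,Q_\theta$. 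A key consequence of the model of Section~\ref{sec:cost-estimation} is that $\min_\gamma F(\gamma)=F(\gamma^\star)=0$: the true gradient lies in $\{\psi_\gamma\}$, and the first-order conditions \eqref{eq:first-order-cond} make $(\nabla\varphi^\star)_\#P$ and $(\nabla\varphi^\star-\nabla B_\theta)_\#Q_\theta$ equal in distribution.

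Next I would establish the deterministic bound $\|\widehat\gamma-\gamma^\star\|_2^2\le (C/\mu)\,\eta^2$, where $\eta\triangleq W_2(P,\widehat P)+W_2(Q_\theta,\widehat Q_\theta)$ and $\mu$ is the strong convexity modulus of $F$. Strong convexity of $F$ together with optimality of $\gamma^\star$ gives the quadratic growth inequality $\tfrac{\mu}{2}\|\widehat\gamma-\gamma^\star\|_2^2\le F(\widehat\gamma)-F(\gamma^\star)=F(\widehat\gamma)$. Using the triangle inequality for $W_2$, the pushforward identity, and a uniform Lipschitz constant $L$ for the family $\{u_\gamma,v_\gamma:\gamma\in\Gamma\}$ (finite under the standing regularity hypotheses — e.g.\ compact $\Gamma$, smooth $\gamma\mapsto\psi_\gamma$, and $\nabla B_\theta$ Lipschitz — via $W_2(f_\#\mu,f_\#\nu)\le L\,W_2(\mu,\nu)$ for $L$-Lipschitz $f$), one obtains $\sqrt{F(\widehat\gamma)}\le\sqrt{\widehat F(\widehat\gamma)}+L\eta$, and then, since $\widehat\gamma$ minimizes $\widehat F$ and $F(\gamma^\star)=0$, $\sqrt{\widehat F(\widehat\gamma)}\le\sqrt{\widehat F(\gamma^\star)}\le\sqrt{F(\gamma^\star)}+L\eta=L\eta$. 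Chaining these yields $F(\widehat\gamma)\le 4L^2\eta^2$, hence $\|\widehat\gamma-\gamma^\star\|_2^2\le(8L^2/\mu)\eta^2$. Finally, taking expectations, using $\eta^2\le 2W_2^2(P,\widehat P)+2W_2^2(Q_\theta,\widehat Q_\theta)$, and invoking the empirical-measure rate $\Ex[W_2^2(P,\widehat P)]\le Cn^{-2/d}$ (valid since $P$ and $Q_\theta$ are compactly supported, absolutely continuous, with bounded second moments; the $\log$ correction at $d=4$ and the faster parametric rate at small $d$ are absorbed into $K$) gives $\Ex[\|\widehat\gamma-\gamma^\star\|_2^2]\le Kn^{-2/d}$.

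The only substantive obstacle is the deterministic step, and specifically the need to avoid a square-root loss. A black-box use of strong convexity only bounds $\|\widehat\gamma-\gamma^\star\|_2^2$ by $\sup_\gamma|F(\gamma)-\widehat F(\gamma)|$, and that supremum, expanded via the $W_2$ triangle inequality, carries a cross term of the form $\sqrt{\widehat F(\gamma)}\cdot L\,W_2(P,\widehat P)$, which is of order $n^{-1/d}$ — a rate-degrading half-power. The fix is the observation $F(\gamma^\star)=0$ (well-specification plus identifiability, the latter being exactly what the strong convexity hypothesis encodes), which forces $\widehat F(\widehat\gamma)\le\widehat F(\gamma^\star)=O(\eta^2)$ and kills the cross term. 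Secondary points requiring care are the uniform-in-$\gamma$ justification of the pushforward-stability inequality (where compactness of $\Gamma$, smoothness of $\psi_\gamma$, and the compact supports enter, so that $\widehat\gamma\in\Gamma$ can be handled) and quoting the empirical $W_2^2$ rate under the stated moment hypotheses; note that strong convexity of the empirical objective $\widehat F$ is not needed for the bound itself, only to guarantee that $\widehat\gamma$ is well-defined.
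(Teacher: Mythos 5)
Your deterministic argument is internally sound, but it hinges entirely on a hypothesis the theorem does not state: $F(\gamma^\star)=\min_{\Pi\in\Delta(P,Q_\theta)}\bL(\Pi,\gamma^\star)=0$, i.e.\ exact well-specification of the parametric class (some $\gamma$ with $\psi_\gamma=\nabla\varphi^\star$). The stated assumptions are only strong convexity of the population and empirical objectives; strong convexity gives a unique minimizer, not a zero minimum value, and the surrounding discussion (the \texttt{MisspErr} term in \eqref{eq:plugin_error}) indicates the authors do not want to assume the class contains the truth. You correctly diagnose that without $F(\gamma^\star)=0$ your cross term of order $\sqrt{\widehat F(\gamma)}\cdot W_2(P,\widehat P)$ degrades the rate to $n^{-1/d}$ --- but the paper does not close that gap with well-specification. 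It closes it with a different mechanism: a symmetric two-sided strong-convexity argument (the basic inequality at $\gamma^\star$ for $\gamma\mapsto\bL(\Pi^\star_\gamma,\gamma)$ and at $\widehat\gamma$ for $\gamma\mapsto\bL(\widehat\Pi_\gamma,\gamma)$, added together) yielding $\kappa\|\widehat\gamma-\gamma^\star\|_2^2\le|\bL(\Pi^\star_{\widehat\gamma},\widehat\gamma)-\bL(\widehat\Pi_{\widehat\gamma},\widehat\gamma)|+|\bL(\widehat\Pi_{\gamma^\star},\gamma^\star)-\bL(\Pi^\star_{\gamma^\star},\gamma^\star)|$, then the same pushforward identity you use to recognize each term as $|W_2^2(\mu_\gamma,\nu_\gamma)-W_2^2(\widehat\mu_\gamma,\widehat\nu_\gamma)|$, and finally Theorem 2 of \citet{manole2024sharp}, which bounds the error of the \emph{plug-in estimate of the squared Wasserstein distance} by $n^{-2/d}$ even when $W_2(\mu_\gamma,\nu_\gamma)$ is bounded away from zero. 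That sharp functional-estimation result is exactly what kills the cross term in the misspecified case; your Fournier--Guillin input controls convergence of the empirical \emph{measures}, a weaker statement that suffices only because you have forced the population distance to be zero.

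Granting well-specification, your route is a legitimate and more elementary alternative (one-sample empirical $W_2$ rates instead of the sharp two-sample functional bound, and no strong convexity of $\widehat F$ needed), with two caveats. First, the uniform-in-$\gamma$ Lipschitz constant for $\{u_\gamma,v_\gamma\}$ (compact $\Gamma$, smooth $\psi_\gamma$, Lipschitz $\nabla B_\theta$) is also not among the stated hypotheses, though the paper's proof needs comparable regularity to make its superdomains $\widetilde\cX_P,\widetilde\cY$ compact, so this is venial. Second, your dimension bookkeeping is off at small $d$: for $d<4$ the one-sample rate $\Ex[W_2^2(P,\widehat P)]\asymp n^{-1/2}$ is \emph{slower} than the claimed $n^{-2/d}$, and at $d=4$ the logarithmic factor cannot be absorbed into $K$, so your route delivers the stated bound only for $d>4$ (the paper's appeal to \citet{manole2024sharp} carries analogous boundary caveats, but those are the paper's to own, not yours to inherit silently).
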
 
This rate was established using convergence results for the empirical Wasserstein distance from \citet[Theorem 2]{manole2024sharp}. Extending the upper bound in \eqref{eq:upper-bound} to multiple ex-post distributions $Q_{\theta_1}, \dots, Q_{\theta_m}$ or to nonidentical sample sizes for ex-ante and ex-post distributions is possible, but, to the best of our knowledge, similar convergence results for the empirical Wasserstein distance that we may require are not available in the literature and we leave this for future work.

\paragraph{Impact of estimating the cost.}
Having estimated the $\widehat \varphi$, we can plug it in eq. \eqref{eq:first-order-cond} and estimate the response map from the equation
\begin{equation}
    \nabla \widehat \varphi(\widehat T_\theta(z)) - \nabla B_\theta(\widehat T_\theta (x)) = \nabla \widehat\varphi(x)\,,
\end{equation} which is then used in the performative loss function to estimate the performative optimal (PO) solution:
\begin{equation}\label{eq:plugin}
    \widehat \theta = \argmin_\theta \Ex_{P_n}[\ell(\theta, \widehat T_\theta (Z))]\,. 
\end{equation} To understand the comparison of $\text{PR}(\widehat \theta)$ against the  $\min_\theta \text{PR}(\theta)$ we recall the analysis in \citet{lin2023plug}; especially Theorem 2. They decompose the risk-difference as
\begin{equation}\label{eq:plugin_error}
    \text{PR}(\widehat \theta) - \min_\theta \text{PR}(\theta) \le 2(\texttt{MisspErr} + \texttt{StatErr})\,. 
\end{equation} The first term is the error due to misspecification in our performative response map, often depending on the problem at hand and users' choice of the class of potential functions. The second term is a statistical error in estimating this response map. For a parametric specification of $\varphi$ we establish a rate of convergence $\cO_P(n^{-\nicefrac{1}{d}})$ for the said parameters (\cf\  Theorem \ref{thm:rate-of-convergence}). Under standard regularity conditions (\eg\ smoothness of the loss function and response map) one can understand that the statistical error will have the same $\cO_P(n^{-\nicefrac{2}{d}})$ rate of convergence. 

\section{Experiments}\label{sec:experiments}

In this section, we demonstrate the effectiveness, robustness, and competitiveness of our method in semi-synthetic setups. We use real data distributions as the ex-ante distributions and simulate agents' strategic behavior as utility-maximizing, consistent with \eqref{eq:agent-util-maximization}.

\textbf{Dataset}: We use the same credit scoring dataset from \citet{perdomo2020Performative}, which is publicly available on Kaggle \citep{cukierski2011credit}. The performative prediction framework is particularly relevant in the credit scoring context, as individuals may face restricted access to credit once a predictive model is deployed by a credit bureau. To counter these constraints, agents adapt their behavior to manipulate the system after a credit predictor is introduced. In this dataset, the target variable is a default indicator ($Y=\texttt{SeriousDlqin2yrs}$), equal to one if an individual is 90 days delinquent or worse, and zero otherwise. Explanatory variables include metrics such as the total balance on credit cards and personal lines of credit, divided by the total credit limits ($X=\texttt{RevolvingUtilizationOfUnsecuredLines}$).

\vspace{-.2cm}
\subsection{Estimating $\varphi'$ and $T_{\theta}$}
\vspace{-.2cm}

In this subsection, we demonstrate the performance of our method to estimate $\varphi'$ and $T_{\theta}$ using both ex-ante and ex-post distributions when the benefit function $B_{\theta}$ is known. To estimate $\varphi$ in the uni-dimensional case with one ex-ante distribution and one ex-post distribution, we use a non-parametric estimator for $\varphi'$. Specifically, $\varphi'$ is estimated via monotonic (isotonic) regression and subsequently integrated to estimate $\varphi$. Since $\texttt{RevolvingUtilizationOfUnsecuredLines}$ has the highest marginal correlation with the target $Y$ and is manipulable by agents, we focus on this feature. We first correctly specify $B_\theta(x)$ as $|\theta| \sqrt{x}$ and then consider two cases of misspecification for $B_\theta(x)$: $|\theta| \log{(x)}$ and $|\theta| \sqrt[3]{x}$. Results for additional types of misspecification are presented in Appendix \ref{sec:append_extra_results}. The predictor $\theta$ is obtained by fitting a logistic regression that predicts $Y$ from $X$. This analysis helps us assess the robustness of our method in the presence of misspecification in the benefit function.

We begin by evaluating the accuracy of our method in estimating $\varphi'$ when $n=200$ samples from each of the two distributions are available. From Figure \ref{fig:exp1_varphi_est}, we observe that the estimated functions are close to the true one when the benefit $B_\theta$ is correctly specified. However, the estimates exhibit significant bias when the benefit is misspecified. Despite this, Figure \ref{fig:exp1_T_est} shows that this bias does not carry over to the estimates of $T_\theta$. Intuitively, this occurs because the misspecification of the benefit function is absorbed during the estimation of $\varphi'$ (thereby compromising its accuracy) but does not affect the overall predictive performance of the method. In Figure \ref{fig:exp1_T_est}, we estimate $T_{c\theta}$ for different values of $c$ and assess whether this robustness persists even when attempting to estimate the map $T_{\tilde{\theta}}$ for values of $\tilde{\theta}$ different from those used to induce the ex-post distribution and estimate $\varphi'$, \ie, $\theta$. An interesting pattern observed in Figure \ref{fig:exp1_T_est} is that the estimates deteriorate as $c$ increases in $c\theta$; this occurs because samples are transported over greater distances to regions of the space where data points are sparse, leading to noisier estimates. For both figures, we report estimates based on different random seeds.
\begin{figure}[h]
\centering
\includegraphics[width=.49\textwidth]{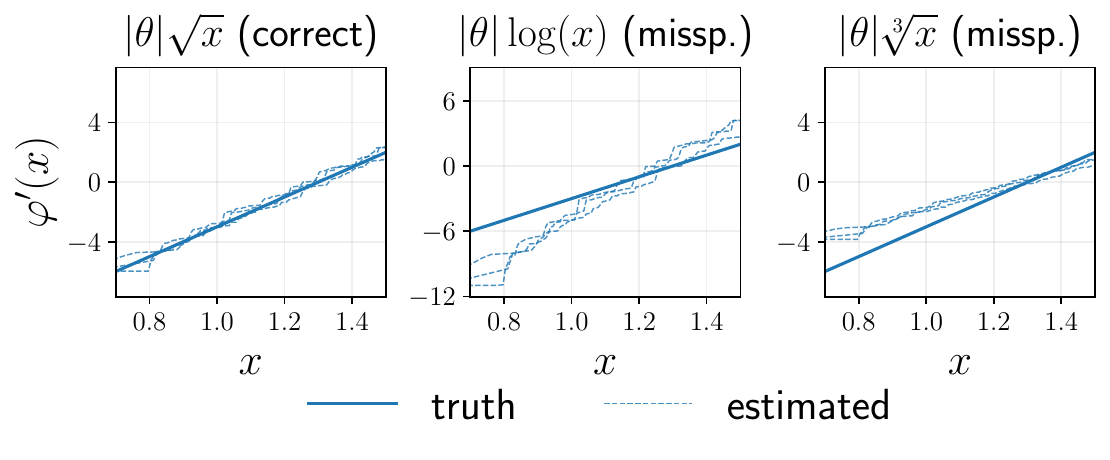}
\vspace{-0.2cm}
\caption{\small The function $\varphi'$ is well estimated when he benefit function $B_\theta $ is correctly specified. On the other hand, misspecification of $B_\theta $ leads to biased estimates of $\varphi'$.}
\label{fig:exp1_varphi_est}
\end{figure}

\begin{figure}[h]
\centering
\includegraphics[width=.5\textwidth]{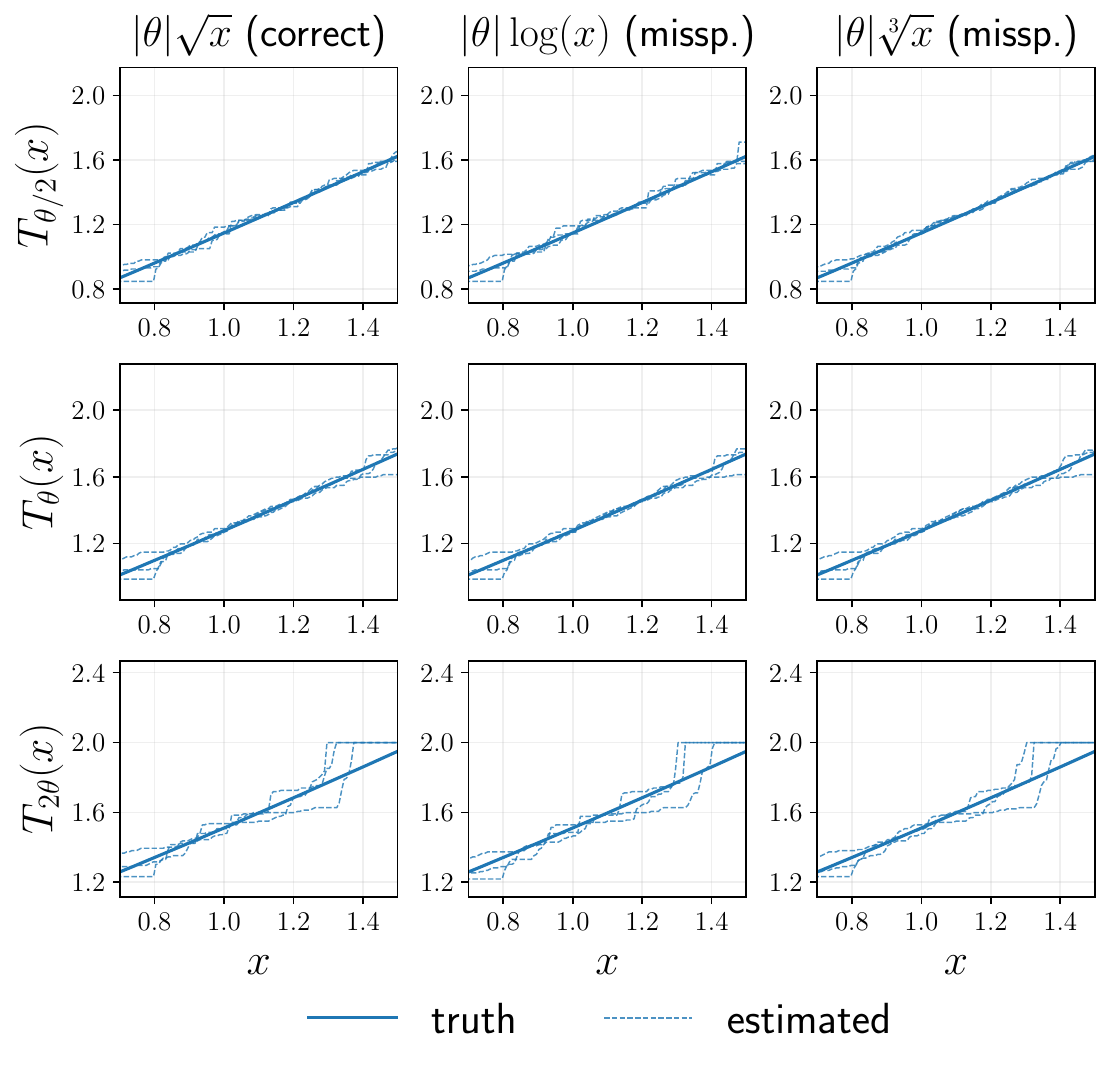}
\vspace{-0.2cm}
\caption{\small The estimation of the transport map $T_{\tilde{\theta}}$ is robust to the misspecification of the benefit function for values of $\tilde{\theta}$ different from those used to induce the ex-post distribution and estimate $\varphi'$, \ie, $\theta$.}
\label{fig:exp1_T_est}
\end{figure}

\begin{figure*}[t]
\centering
\includegraphics[width=.95\textwidth]{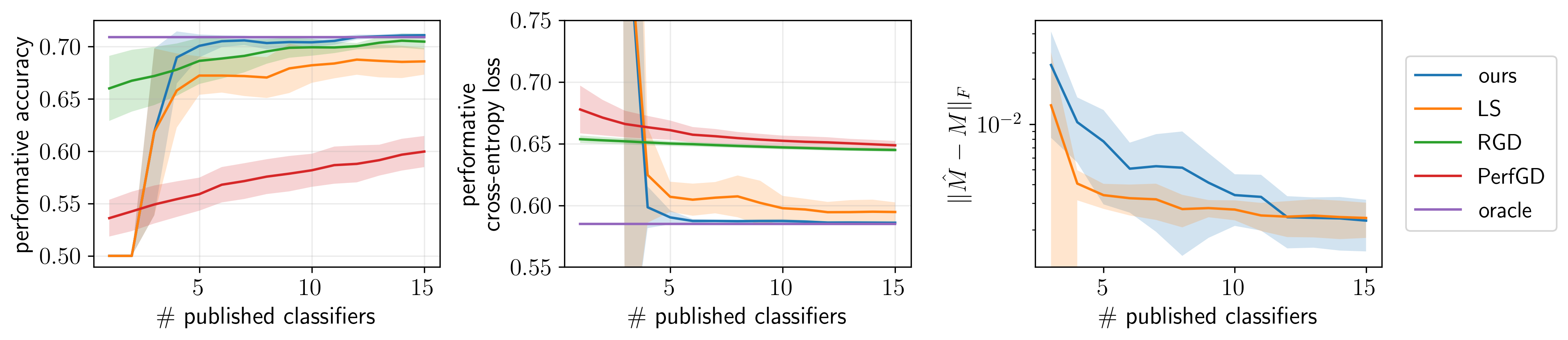}
\vspace{-0.2cm}
\caption{Performative performance for different $\#$ published classifiers: the plots depict the performative test accuracy/cross-entropy loss as the $\#$ classifiers increases. Compared to the baselines, our method converges much faster to the optimal classifier.}
\label{fig:exp2}
\vspace{-0.2cm}
\end{figure*}

Finally, we show how the estimation error for $T_\theta$ behaves with different sample sizes when the $B_\theta$ is correctly specified. We assume that the number of data points in each one of the distributions is $n\in\{10,25,50,100,200\}$ and consider the estimation error to be $\int_\cX |T_\theta(x)-\hat{T}_\theta(x)|dx$ where $\cX$ denotes the support of our observations, which is fixed through sample sizes. Figure \ref{fig:exp1_ss_est} shows that the estimation error of our method decreases as the sample size grows. The dots give the average error, and the error bars the standard deviation across random seeds.
\begin{figure}[h]
\centering
\includegraphics[width=.35\textwidth]{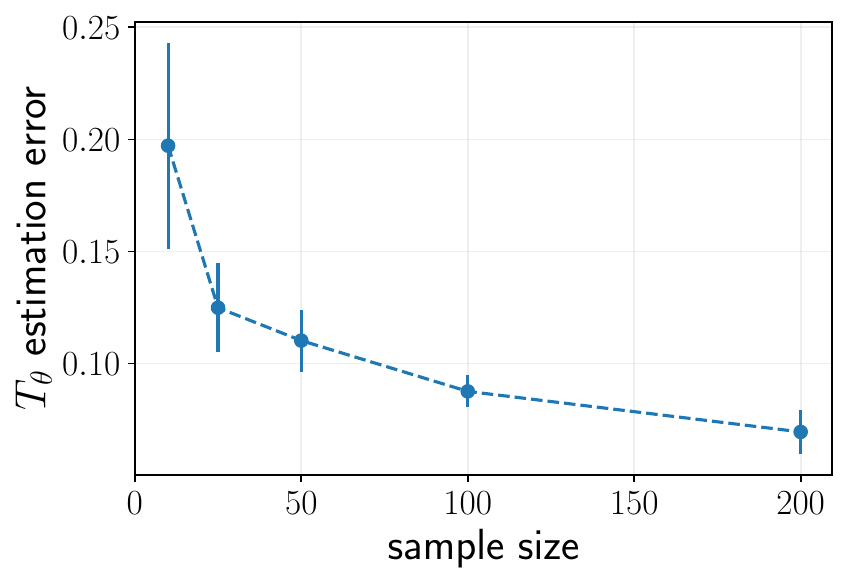}
\vspace{-0.2cm}
\caption{\small The estimation error of our method, when estimating $T_\theta$, decreases with growing sample size.}
\vspace{-0.5cm}
\label{fig:exp1_ss_est}
\end{figure}

\subsection{Minimizing the performative risk}
\label{sec:parametric}

In this subsection, we apply our method to minimize the performative risk. We work in a multidimensional setup in which $X$ is composed of \texttt{RevolvingUtilizationOfUnsecuredLines}, \texttt{DebtRatio} (monthly debt payments divided by monthly gross income), \texttt{Monthly income}, and a constant representing the intercept. To simplify things, we consider all the features in $X$ to be performative and assume a linear benefit with a quadratic cost structure as in Example \ref{ex:strategic-OLS}. To obtain ex-ante and ex-post data we publish a set of classifiers $\{\theta_0,\cdots,\theta_{K-1}\}$ randomly sampled from a Gaussian distribution centered around the optimal classifier in the ex-ante distribution $P$. We then estimate $M$ using Algorithm \ref{alg:pushforward-alignment} where the minimization step over the set of all matrices is done using the Cholesky decomposition $M=LL^\top$ by optimizing for $L$ over the set of lower triangular matrices. Once the map $T_\theta$ is estimated, we obtain a plugin estimator \eqref{eq:plugin} for a logistic regression by minimizing the cross-entropy loss. More details are in Appendix \ref{sec:detail_append}.

We compare our method with four baselines: (1) \textit{oracle}, the classifier that uses the true performative map to minimize performative cross-entropy loss in the test dataset; (2) Repeated Gradient Descent (\texttt{RGD}) \citep{perdomo2020Performative}; (3) Performative Gradient Descent (\texttt{PerfGD}) \citep{izzo2022How}; and (4) the two-step method proposed by \citet{miller2021Outside} which includes least-squares estimator for the matrix $M$, also used by \citet[Algorithm 2]{jagadeesan2022Regret}. We call the last method \texttt{LS}. For \texttt{RGD} and \texttt{PerfGD}, we ran experiments with different learning rates and selected those maximizing performative accuracy after the last classifier was published.  For \texttt{PerfGD}, we assume that the induced distributions are Gaussian with a known mean structure (as it depends on $T_\theta$), a known covariance, and a known derivative of its mean with respect to\footnote{In equation 4 of \citet{izzo2022How}.} $\theta$. 

The left and middle plots in Figure \ref{fig:exp2} present the test set accuracies and cross-entropy losses across various classification strategies as the number of published classifiers increases. The right plot illustrates a comparison between our estimator for $M$ and the least squares approach from \citet{miller2021Outside}. In all cases, our method demonstrates competitive performance relative to the baselines. Two additional observations are noteworthy. First, despite our implementations of \texttt{RGD} and \texttt{PerfGD} being optimized for those methods, their performance can be suboptimal in several scenarios, as shown in Figure \ref{fig:exp2} and Appendix \ref{sec:append_extra_results}. The Gaussianity assumption inherent in \texttt{PerfGD}, which is a standard assumption in \citet{izzo2022How}, appears to contribute to its weaker performance and failure to converge to the oracle in some cases. Second, since our model is correctly specified in this scenario, we observe that its performance converges to that of the oracle as the sample size grows, \ie, the number of published classifiers, which is consistent with the behavior predicted by \eqref{eq:plugin_error}.


\vspace{-0.2cm}
\section{Summary and discussion}\label{sec:discussion}

We propose algorithms to infer the performative distribution shift map in settings where samples are from strategic agents. Each agent responds to a model $\theta$ by solving $\argmax_{z'}B_{\theta}(z') - c_{\varphi}(z', z)$, where $B_{\theta}(\cdot)$ is a known benefit function and $c_{\varphi}(\cdot, \cdot)$ is an unknown cost induced by the Bregman potential $\varphi$, whose structure offers analytical convenience without sacrificing much generality. Our key method estimates the gradient of the Bregman potential $\nabla \varphi$ by aligning ex-ante and ex-post distributions using optimal transport. Our experiments show that the method is effective and robust to misspecifications in the benefit function $B_{\theta}(\cdot)$.

Performative prediction remains highly relevant in ML research and is applicable in a broad range of areas. Accurate knowledge of performative microfoundations is crucial for the deployment of effective and socially responsible models. Our method of learning agent response maps can make such deployment more accessible in strategic prediction problems, allowing practitioners to use fast-converging algorithms that require these response maps without fear of misspecification. While our paper focuses on prediction problems with strategic agents, learning distribution shifts in broader performative prediction problems, potentially involving non-strategic agents, is an important question for future research and is beyond the scope of this paper.

\section{Code Availability}
\label{sec:code}

The code used in this work is available at \url{https://github.com/felipemaiapolo/microfoundation_inference} \citep{maiapolo2025code}.

\bibliography{main}

\begin{thebibliography}{47}
\providecommand{\natexlab}[1]{#1}
\providecommand{\url}[1]{\texttt{#1}}
\expandafter\ifx\csname urlstyle\endcsname\relax
  \providecommand{\doi}[1]{doi: #1}\else
  \providecommand{\doi}{doi: \begingroup \urlstyle{rm}\Url}\fi

\bibitem[Bregman(1967)]{bregman1967relaxation}
L.~M. Bregman.
\newblock The relaxation method of finding the common point of convex sets and its application to the solution of problems in convex programming.
\newblock \emph{USSR Computational Mathematics and Mathematical Physics}, 7\penalty0 (3):\penalty0 200--217, January 1967.
\newblock ISSN 0041-5553.
\newblock \doi{10.1016/0041-5553(67)90040-7}.

\bibitem[Campbell(1979)]{campbell1979assessing}
Donald~T Campbell.
\newblock Assessing the impact of planned social change.
\newblock \emph{Evaluation and program planning}, 2\penalty0 (1):\penalty0 67--90, 1979.

\bibitem[Chen et~al.(2020)Chen, Liu, and Podimata]{chen2020Learning}
Yiling Chen, Yang Liu, and Chara Podimata.
\newblock Learning strategy-aware linear classifiers.
\newblock In \emph{Proceedings of the 34th {{International Conference}} on {{Neural Information Processing Systems}}}, {{NIPS}}'20, pages 15265--15276, {Red Hook, NY, USA}, December 2020. {Curran Associates Inc.}
\newblock ISBN 978-1-71382-954-6.

\bibitem[Coate and Loury(1993)]{coate1993Will}
Stephen Coate and Glenn~C. Loury.
\newblock Will {{Affirmative-Action Policies Eliminate Negative Stereotypes}}?
\newblock \emph{The American Economic Review}, 83\penalty0 (5):\penalty0 1220--1240, 1993.
\newblock ISSN 0002-8282.

\bibitem[Cukierski(2011)]{cukierski2011credit}
Will Cukierski.
\newblock Give me some credit.
\newblock \url{https://kaggle.com/competitions/GiveMeSomeCredit}, 2011.

\bibitem[Cutler et~al.(2021)Cutler, Drusvyatskiy, and Harchaoui]{cutler2021Stochastic}
Joshua Cutler, Dmitriy Drusvyatskiy, and Zaid Harchaoui.
\newblock Stochastic optimization under time drift: Iterate averaging, step-decay schedules, and high probability guarantees.
\newblock In \emph{Thirty-{{Fifth Conference}} on {{Neural Information Processing Systems}}}, May 2021.

\bibitem[D'Amour et~al.(2020)D'Amour, Srinivasan, Atwood, Baljekar, Sculley, and Halpern]{FairnessIsNotStatic_2020}
Alexander D'Amour, Hansa Srinivasan, James Atwood, Pallavi Baljekar, D.~Sculley, and Yoni Halpern.
\newblock Fairness is not static: deeper understanding of long term fairness via simulation studies.
\newblock In \emph{Proceedings of the 2020 Conference on Fairness, Accountability, and Transparency}, FAT* '20, page 525–534, New York, NY, USA, 2020. Association for Computing Machinery.
\newblock ISBN 9781450369367.
\newblock \doi{10.1145/3351095.3372878}.
\newblock URL \url{https://doi.org/10.1145/3351095.3372878}.

\bibitem[Dong et~al.(2018)Dong, Roth, Schutzman, Waggoner, and Wu]{dong2018Strategic}
Jinshuo Dong, Aaron Roth, Zachary Schutzman, Bo~Waggoner, and Zhiwei~Steven Wu.
\newblock Strategic {{Classification}} from {{Revealed Preferences}}.
\newblock In \emph{Proceedings of the 2018 {{ACM Conference}} on {{Economics}} and {{Computation}}}, {{EC}} '18, pages 55--70, {New York, NY, USA}, June 2018. {Association for Computing Machinery}.
\newblock ISBN 978-1-4503-5829-3.
\newblock \doi{10.1145/3219166.3219193}.

\bibitem[Drusvyatskiy and Xiao(2023)]{drusvyatskiy2023Stochastic}
Dmitriy Drusvyatskiy and Lin Xiao.
\newblock Stochastic {{Optimization}} with {{Decision-Dependent Distributions}}.
\newblock \emph{Mathematics of Operations Research}, 48\penalty0 (2):\penalty0 954--998, May 2023.
\newblock ISSN 0364-765X.
\newblock \doi{10.1287/moor.2022.1287}.

\bibitem[Eilat et~al.(2023)Eilat, Finkelshtein, Baskin, and Rosenfeld]{eilat2023strategic}
Itay Eilat, Ben Finkelshtein, Chaim Baskin, and Nir Rosenfeld.
\newblock Strategic classification with graph neural networks, 2023.

\bibitem[Estornell et~al.(2021)Estornell, Das, Liu, and Vorobeychik]{estornell2021unfairness}
Andrew Estornell, Sanmay Das, Yang Liu, and Yevgeniy Vorobeychik.
\newblock Unfairness despite awareness: Group-fair classification with strategic agents, 2021.

\bibitem[Fang and Moro(2011)]{fang2011Chapter}
Hanming Fang and Andrea Moro.
\newblock Chapter 5 - {{Theories}} of {{Statistical Discrimination}} and {{Affirmative Action}}: {{A Survey}}.
\newblock In Jess Benhabib, Alberto Bisin, and Matthew~O. Jackson, editors, \emph{Handbook of {{Social Economics}}}, volume~1, pages 133--200. {North-Holland}, January 2011.
\newblock \doi{10.1016/B978-0-444-53187-2.00005-X}.

\bibitem[Ghalme et~al.(2021)Ghalme, Nair, Eilat, {Talgam-Cohen}, and Rosenfeld]{ghalme2021Strategica}
Ganesh Ghalme, Vineet Nair, Itay Eilat, Inbal {Talgam-Cohen}, and Nir Rosenfeld.
\newblock Strategic {{Classification}} in the {{Dark}}.
\newblock In \emph{Proceedings of the 38th {{International Conference}} on {{Machine Learning}}}, pages 3672--3681. {PMLR}, July 2021.

\bibitem[Hardt et~al.(2016)Hardt, Megiddo, Papadimitriou, and Wootters]{hardt2016Strategic}
Moritz Hardt, Nimrod Megiddo, Christos Papadimitriou, and Mary Wootters.
\newblock Strategic {{Classification}}.
\newblock In \emph{Proceedings of the 2016 {{ACM Conference}} on {{Innovations}} in {{Theoretical Computer Science}}}, {{ITCS}} '16, pages 111--122, {New York, NY, USA}, January 2016. {Association for Computing Machinery}.
\newblock ISBN 978-1-4503-4057-1.
\newblock \doi{10.1145/2840728.2840730}.

\bibitem[Harris et~al.(2022)Harris, Ngo, Stapleton, Heidari, and Wu]{harris2022Strategic}
Keegan Harris, Dung Daniel~T. Ngo, Logan Stapleton, Hoda Heidari, and Steven Wu.
\newblock Strategic {{Instrumental Variable Regression}}: {{Recovering Causal Relationships From Strategic Responses}}.
\newblock In \emph{Proceedings of the 39th {{International Conference}} on {{Machine Learning}}}, pages 8502--8522. {PMLR}, June 2022.

\bibitem[Horowitz and Rosenfeld(2023)]{horowitz2023Causal}
Guy Horowitz and Nir Rosenfeld.
\newblock Causal {{Strategic Classification}}: {{A Tale}} of {{Two Shifts}}, February 2023.

\bibitem[Hron et~al.(2022)Hron, Krauth, Jordan, Kilbertus, and Dean]{hron2022Modeling}
Jiri Hron, Karl Krauth, Michael Jordan, Niki Kilbertus, and Sarah Dean.
\newblock Modeling content creator incentives on algorithm-curated platforms.
\newblock In \emph{The {{Eleventh International Conference}} on {{Learning Representations}}}, September 2022.

\bibitem[Hron et~al.(2023)Hron, Krauth, Jordan, Kilbertus, and Dean]{hron2023Modeling}
Jiri Hron, Karl Krauth, Michael Jordan, Niki Kilbertus, and Sarah Dean.
\newblock Modeling content creator incentives on algorithm-curated platforms.
\newblock In \emph{The {{Eleventh International Conference}} on {{Learning Representations}}}, February 2023.

\bibitem[Izzo et~al.(2021)Izzo, Ying, and Zou]{izzo2021How}
Zachary Izzo, Lexing Ying, and James Zou.
\newblock How to {{Learn}} when {{Data Reacts}} to {{Your Model}}: {{Performative Gradient Descent}}.
\newblock In \emph{Proceedings of the 38th {{International Conference}} on {{Machine Learning}}}, pages 4641--4650. {PMLR}, July 2021.

\bibitem[Izzo et~al.(2022)Izzo, Zou, and Ying]{izzo2022How}
Zachary Izzo, James Zou, and Lexing Ying.
\newblock How to {{Learn}} when {{Data Gradually Reacts}} to {{Your Model}}.
\newblock In \emph{Proceedings of {{The}} 25th {{International Conference}} on {{Artificial Intelligence}} and {{Statistics}}}, pages 3998--4035. {PMLR}, May 2022.

\bibitem[Jagadeesan et~al.(2022)Jagadeesan, Zrnic, and {Mendler-D{\"u}nner}]{jagadeesan2022Regret}
Meena Jagadeesan, Tijana Zrnic, and Celestine {Mendler-D{\"u}nner}.
\newblock Regret {{Minimization}} with {{Performative Feedback}}.
\newblock In \emph{Proceedings of the 39th {{International Conference}} on {{Machine Learning}}}, pages 9760--9785. {PMLR}, June 2022.

\bibitem[Jagadeesan et~al.(2023)Jagadeesan, Garg, and Steinhardt]{jagadeesan2023SupplySidea}
Meena Jagadeesan, Nikhil Garg, and Jacob Steinhardt.
\newblock Supply-{{Side Equilibria}} in {{Recommender Systems}}.
\newblock In \emph{Thirty-Seventh {{Conference}} on {{Neural Information Processing Systems}}}, November 2023.

\bibitem[Jang et~al.(2022)Jang, Park, Lee, and Bastani]{jang2022Sequential}
Sooyong Jang, Sangdon Park, Insup Lee, and Osbert Bastani.
\newblock Sequential {{Covariate Shift Detection Using Classifier Two-Sample Tests}}.
\newblock In \emph{Proceedings of the 39th {{International Conference}} on {{Machine Learning}}}, pages 9845--9880. {PMLR}, June 2022.

\bibitem[Lechner et~al.(2023)Lechner, Urner, and Ben-David]{LechnerUnknown2023}
Tosca Lechner, Ruth Urner, and Shai Ben-David.
\newblock Strategic classification with unknown user manipulations.
\newblock In \emph{Proceedings of the 40th International Conference on Machine Learning}, ICML'23. JMLR.org, 2023.

\bibitem[Levanon and Rosenfeld(2021)]{levanon2021Strategica}
Sagi Levanon and Nir Rosenfeld.
\newblock Strategic {{Classification Made Practical}}.
\newblock In \emph{Proceedings of the 38th {{International Conference}} on {{Machine Learning}}}, pages 6243--6253. {PMLR}, July 2021.

\bibitem[Levanon and Rosenfeld(2022)]{levanon2022Generalized}
Sagi Levanon and Nir Rosenfeld.
\newblock Generalized {{Strategic Classification}} and the {{Case}} of {{Aligned Incentives}}.
\newblock In \emph{Proceedings of the 39th {{International Conference}} on {{Machine Learning}}}, pages 12593--12618. {PMLR}, June 2022.

\bibitem[Lin and Zrnic(2023)]{lin2023plug}
Licong Lin and Tijana Zrnic.
\newblock Plug-in performative optimization.
\newblock \emph{arXiv preprint arXiv:2305.18728}, 2023.

\bibitem[Liu et~al.(2018)Liu, Dean, Rolf, Simchowitz, and Hardt]{liu2018delayed}
Lydia~T. Liu, Sarah Dean, Esther Rolf, Max Simchowitz, and Moritz Hardt.
\newblock Delayed {{Impact}} of {{Fair Machine Learning}}.
\newblock \emph{arXiv:1803.04383 [cs, stat]}, March 2018.

\bibitem[Liu et~al.(2023)Liu, Yang, Wang, and Sun]{liu2023contextual}
Pangpang Liu, Zhuoran Yang, Zhaoran Wang, and Will~Wei Sun.
\newblock Contextual dynamic pricing with strategic buyers.
\newblock \emph{arXiv preprint arXiv:2307.04055}, 2023.

\bibitem[Maia~Polo(2024)]{maiapolo2025code}
F.~Maia~Polo.
\newblock microfoundation inference code, 2024.
\newblock URL \url{https://github.com/felipemaiapolo/microfoundation_inference}.

\bibitem[Maity et~al.(2021)Maity, Mukherjee, Yurochkin, and Sun]{maity2021Does}
Subha Maity, Debarghya Mukherjee, Mikhail Yurochkin, and Yuekai Sun.
\newblock Does enforcing fairness mitigate biases caused by subpopulation shift?
\newblock In \emph{Advances in {{Neural Information Processing Systems}}}, November 2021.

\bibitem[Manole and Niles-Weed(2024)]{manole2024sharp}
Tudor Manole and Jonathan Niles-Weed.
\newblock Sharp convergence rates for empirical optimal transport with smooth costs.
\newblock \emph{The Annals of Applied Probability}, 34\penalty0 (1B):\penalty0 1108--1135, 2024.

\bibitem[{Mendler-D{\"u}nner} et~al.(2020){Mendler-D{\"u}nner}, Perdomo, Zrnic, and Hardt]{mendler-dunner2020Stochastic}
Celestine {Mendler-D{\"u}nner}, Juan~C. Perdomo, Tijana Zrnic, and Moritz Hardt.
\newblock Stochastic optimization for performative prediction.
\newblock In \emph{Proceedings of the 34th {{International Conference}} on {{Neural Information Processing Systems}}}, {{NIPS}}'20, pages 4929--4939, {Red Hook, NY, USA}, December 2020. {Curran Associates Inc.}
\newblock ISBN 978-1-71382-954-6.

\bibitem[{Mendler-D{\"u}nner} et~al.(2022){Mendler-D{\"u}nner}, Ding, and Wang]{mendler-dunner2022Anticipating}
Celestine {Mendler-D{\"u}nner}, Frances Ding, and Yixin Wang.
\newblock Anticipating {{Performativity}} by {{Predicting}} from {{Predictions}}.
\newblock In \emph{Advances in {{Neural Information Processing Systems}}}, October 2022.

\bibitem[Miller et~al.(2020)Miller, Milli, and Hardt]{miller2020Strategic}
John Miller, Smitha Milli, and Moritz Hardt.
\newblock Strategic {{Classification}} is {{Causal Modeling}} in {{Disguise}}.
\newblock \emph{arXiv:1910.10362 [cs, stat]}, February 2020.

\bibitem[Miller et~al.(2021)Miller, Perdomo, and Zrnic]{miller2021Outside}
John~P. Miller, Juan~C. Perdomo, and Tijana Zrnic.
\newblock Outside the {{Echo Chamber}}: {{Optimizing}} the {{Performative Risk}}.
\newblock In \emph{Proceedings of the 38th {{International Conference}} on {{Machine Learning}}}, pages 7710--7720. {PMLR}, July 2021.

\bibitem[Moro and Norman(2003)]{moro2003Affirmative}
Andrea Moro and Peter Norman.
\newblock Affirmative action in a competitive economy.
\newblock \emph{Journal of Public Economics}, 87\penalty0 (3-4):\penalty0 567--594, March 2003.
\newblock ISSN 00472727.
\newblock \doi{10.1016/S0047-2727(01)00121-9}.

\bibitem[Moro and Norman(2004)]{moro2004general}
Andrea Moro and Peter Norman.
\newblock A general equilibrium model of statistical discrimination.
\newblock \emph{Journal of Economic Theory}, 114\penalty0 (1):\penalty0 1--30, January 2004.
\newblock ISSN 00220531.
\newblock \doi{10.1016/S0022-0531(03)00165-0}.

\bibitem[Perdomo et~al.(2020)Perdomo, Zrnic, Mendler-D{\"u}nner, and Hardt]{perdomo2020Performative}
Juan Perdomo, Tijana Zrnic, Celestine Mendler-D{\"u}nner, and Moritz Hardt.
\newblock Performative prediction.
\newblock In \emph{International Conference on Machine Learning}, pages 7599--7609. PMLR, 2020.

\bibitem[Shavit et~al.(2020)Shavit, Edelman, and Axelrod]{shavit2020Causal}
Yonadav Shavit, Benjamin~L. Edelman, and Brian Axelrod.
\newblock Causal strategic linear regression.
\newblock In \emph{Proceedings of the 37th {{International Conference}} on {{Machine Learning}}}, volume 119 of \emph{{{ICML}}'20}, pages 8676--8686. {JMLR.org}, July 2020.

\bibitem[Somerstep et~al.(2023)Somerstep, Sun, and Ritov]{somerstep2023Learning}
Seamus Somerstep, Yuekai Sun, and Ya'acov Ritov.
\newblock Learning in reverse causal strategic environments with ramifications on two sided markets.
\newblock In \emph{{{NeurIPS}} 2023 {{Workshop}} on {{Algorithmic Fairness}} through the {{Lens}} of {{Time}} ({{AFT2023}})}, December 2023.

\bibitem[Somerstep et~al.(2024)Somerstep, Ritov, and Sun]{10.1145/3630106.3658929}
Seamus Somerstep, Ya'acov Ritov, and Yuekai Sun.
\newblock Algorithmic fairness in performative policy learning: Escaping the impossibility of group fairness.
\newblock In \emph{Proceedings of the 2024 ACM Conference on Fairness, Accountability, and Transparency}, FAccT '24, page 616–630, New York, NY, USA, 2024. Association for Computing Machinery.
\newblock ISBN 9798400704505.
\newblock \doi{10.1145/3630106.3658929}.
\newblock URL \url{https://doi.org/10.1145/3630106.3658929}.

\bibitem[Sundaram et~al.(2021)Sundaram, Vullikanti, Xu, and Yao]{sundaram2021paclearning}
Ravi Sundaram, Anil Vullikanti, Haifeng Xu, and Fan Yao.
\newblock Pac-learning for strategic classification, 2021.

\bibitem[Tseng(2001)]{10.1023/A:1017501703105}
P.~Tseng.
\newblock Convergence of a block coordinate descent method for nondifferentiable minimization.
\newblock \emph{J. Optim. Theory Appl.}, 109\penalty0 (3):\penalty0 475–494, June 2001.
\newblock ISSN 0022-3239.
\newblock \doi{10.1023/A:1017501703105}.
\newblock URL \url{https://doi.org/10.1023/A:1017501703105}.

\bibitem[Tsirtsis et~al.(2024)Tsirtsis, Tabibian, Khajehnejad, Singla, Schölkopf, and Gomez-Rodriguez]{Tsirtsis_2024}
Stratis Tsirtsis, Behzad Tabibian, Moein Khajehnejad, Adish Singla, Bernhard Schölkopf, and Manuel Gomez-Rodriguez.
\newblock Optimal decision making under strategic behavior.
\newblock \emph{Management Science}, 70\penalty0 (12):\penalty0 8506–8519, December 2024.
\newblock ISSN 1526-5501.
\newblock \doi{10.1287/mnsc.2021.02567}.
\newblock URL \url{http://dx.doi.org/10.1287/mnsc.2021.02567}.

\bibitem[Villani(2009)]{villani2009Optimal}
C{\'e}dric Villani.
\newblock \emph{Optimal Transport: Old and New}.
\newblock Number 338 in Grundlehren Der Mathematischen {{Wissenschaften}}. {Springer}, {Berlin}, 2009.
\newblock ISBN 978-3-540-71049-3.

\bibitem[Zrnic et~al.(2022)Zrnic, Mazumdar, Sastry, and Jordan]{zrnic2022Who}
Tijana Zrnic, Eric Mazumdar, S.~Shankar Sastry, and Michael~I. Jordan.
\newblock Who {{Leads}} and {{Who Follows}} in {{Strategic Classification}}?, January 2022.

\end{thebibliography}
\bibliographystyle{plainnat}

\newpage
\onecolumn
\section*{Checklist}

 \begin{enumerate}

 \item For all models and algorithms presented, check if you include:
 \begin{enumerate}
   \item A clear description of the mathematical setting, assumptions, algorithm, and/or model. [Yes]
   \item An analysis of the properties and complexity (time, space, sample size) of any algorithm. [Yes]
   \item (Optional) Anonymized source code, with specification of all dependencies, including external libraries. [Yes]
 \end{enumerate}

 \item For any theoretical claim, check if you include:
 \begin{enumerate}
   \item Statements of the full set of assumptions of all theoretical results. [Yes]
   \item Complete proofs of all theoretical results. [Yes]
   \item Clear explanations of any assumptions. [Yes]     
 \end{enumerate}

 \item For all figures and tables that present empirical results, check if you include:
 \begin{enumerate}
   \item The code, data, and instructions needed to reproduce the main experimental results (either in the supplemental material or as a URL). [Yes]
   \item All the training details (e.g., data splits, hyperparameters, how they were chosen). [Yes]
    \item A clear definition of the specific measure or statistics and error bars (e.g., with respect to the random seed after running experiments multiple times). [Yes]
    \item A description of the computing infrastructure used. (e.g., type of GPUs, internal cluster, or cloud provider). [Not Applicable]
 \end{enumerate}

 \item If you are using existing assets (e.g., code, data, models) or curating/releasing new assets, check if you include:
 \begin{enumerate}
   \item Citations of the creator If your work uses existing assets. [Yes]
   \item The license information of the assets, if applicable. [Not Applicable]
   \item New assets either in the supplemental material or as a URL, if applicable. [Not Applicable]
   \item Information about consent from data providers/curators. [Not Applicable]
   \item Discussion of sensible content if applicable, e.g., personally identifiable information or offensive content. [Not Applicable]
 \end{enumerate}

 \item If you used crowdsourcing or conducted research with human subjects, check if you include:
 \begin{enumerate}
   \item The full text of instructions given to participants and screenshots. [Not Applicable]
   \item Descriptions of potential participant risks, with links to Institutional Review Board (IRB) approvals if applicable. [Not Applicable]
   \item The estimated hourly wage paid to participants and the total amount spent on participant compensation. [Not Applicable]
 \end{enumerate}
 \end{enumerate}

\newpage
\makeatletter
\renewcommand{\aistatstitle}[1]{
  \hsize\textwidth
  \linewidth\hsize
  \toptitlebar 
  {\centering
  {\Large\bfseries #1 \par}}
  \bottomtitlebar 
  \vskip 0.1in  
}
\makeatother

\newpage

\newpage
\appendix
%

%
\runningauthor{Bracale, Maia Polo, Maity, Somerstep, Banerjee, Sun}

\onecolumn
\aistatstitle{Microfoundation Inference for Strategic Prediction: \\
Supplementary Materials}
\section{Supplementary proofs}

\label{sec:proof}

\begin{lemma}
    \label{lemma:missp-map-minimization}
    The misspecified performative risk $\widehat{\textsc{PR}}(\theta) = \sigma^2 + \|\theta - \theta^\star\|_2^2 + \|\theta\|_2^4$  in Example \ref{ex:strategic-OLS} is minimized at $\hat \theta = c \theta^\star$, where $c$ is the only positive solution to equation $2 \|\theta^\star\|_2^2 c^3 + c - 1 = 0$. 
\end{lemma}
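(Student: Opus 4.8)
The plan is to exploit that $\widehat{\textsc{PR}}(\theta)$ depends on $\theta$ only through the two quantities $\|\theta\|_2^2$ and $\|\theta - \theta^\star\|_2^2$, and then reduce to a one-dimensional calculus problem along the ray spanned by $\theta^\star$.

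First I would decompose any $\theta \in \reals^d$ as $\theta = a\theta^\star + v$ with $v \perp \theta^\star$, so that $\|\theta\|_2^2 = a^2\|\theta^\star\|_2^2 + \|v\|_2^2$ and $\|\theta - \theta^\star\|_2^2 = (a-1)^2\|\theta^\star\|_2^2 + \|v\|_2^2$. Both of these appear in $\widehat{\textsc{PR}}$ as nondecreasing functions of $\|v\|_2^2$ (note $\|\theta\|_2^4$ is a nondecreasing function of $\|\theta\|_2^2$), and strictly so, so the objective strictly decreases if one replaces a nonzero $v$ by $0$. Hence every minimizer has the form $\hat\theta = c\theta^\star$ for some scalar $c \in \reals$, and it remains to minimize $g(c) \triangleq \widehat{\textsc{PR}}(c\theta^\star) = \sigma^2 + (c-1)^2\|\theta^\star\|_2^2 + c^4\|\theta^\star\|_2^4$ over $c \in \reals$.

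Next I would compute $g'(c) = 2\|\theta^\star\|_2^2\big[(c-1) + 2c^3\|\theta^\star\|_2^2\big] = 2\|\theta^\star\|_2^2\, p(c)$, where $p(c) \triangleq 2\|\theta^\star\|_2^2 c^3 + c - 1$. Since $p'(c) = 6\|\theta^\star\|_2^2 c^2 + 1 > 0$, the cubic $p$ is strictly increasing on $\reals$ and therefore has a unique real root; from $p(0) = -1 < 0$ and $p(1) = 2\|\theta^\star\|_2^2 > 0$ this root lies in $(0,1)$, so it is exactly the unique positive solution of $2\|\theta^\star\|_2^2 c^3 + c - 1 = 0$. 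Because $g'$ has the same sign as $p$, the function $g$ is strictly decreasing to the left of this root and strictly increasing to its right, so the root is the unique global minimizer of $g$; combined with the first reduction, this yields $\hat\theta = c\theta^\star$.

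I do not anticipate a serious obstacle: $g$ is coercive and continuous, so it attains a global minimum on $\reals$ and the stationary-point analysis genuinely identifies the minimizer rather than a spurious critical point. The only minor points to handle carefully are the degenerate case $\theta^\star = 0$ (then $p(c) = c-1$, $c = 1$, and $\hat\theta = 0 = c\theta^\star$, consistent with the claim) and the strictness in the $v = 0$ reduction, which is what guarantees the minimizer lies exactly on the ray $\{c\theta^\star : c \in \reals\}$.
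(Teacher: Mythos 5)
Your proof is correct and follows essentially the same route as the paper's: both reduce the minimization to the ray $\{c\theta^\star\}$ and then analyze the same strictly increasing cubic $p(c)=2\|\theta^\star\|_2^2c^3+c-1$, using $p'(c)>0$, $p(0)=-1<0$ and $p(1)=2\|\theta^\star\|_2^2\ge 0$ to locate the unique positive root. The only (minor) difference is how you land on the ray—via the orthogonal decomposition $\theta=a\theta^\star+v$ and strict monotonicity in $\|v\|_2^2$, rather than the paper's stationarity condition $(1+2\|\theta\|_2^2)\theta=\theta^\star$—and your version is, if anything, a bit more careful, since it also records coercivity (existence of a minimizer) and the degenerate case $\theta^\star=0$.
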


\begin{proof}[Proof of Lemma \ref{lemma:missp-map-minimization}]
   The  misspecified performative risk and its first-order for its minimization are
\begin{equation}
\begin{aligned}
     & \textstyle
    \widehat{\text{PR}}(\theta) = \sigma^2 + \|\theta - \theta^\star\|_2^2 + \|\theta\|_2^4 \\
    & \textstyle \nabla_\theta \widehat{\text{PR}}(\theta) = 2 (\theta - \theta^\star) + 4 \|\theta\|_2^2 \theta = 0 \\
    \implies & \textstyle \big\{1 + 2\|\theta\|_2^2\big\} \theta = \theta^\star ~  \text{or,} ~   \textstyle \theta = c \theta^\star\,,
\end{aligned}
\end{equation} for some $0 <c \le 1$. It remains to optimize over $c \in (0, 1]$. Letting $\theta = c\theta^\star$ in $\widehat{\text{PR}}(\theta)$ we obtain 
\begin{equation}
    \textstyle \widehat{\text{PR}}(c\theta^\star) = \sigma^2 + (c - 1)^2 \|\theta^\star\|_2^2 + c^4 \|\theta^\star\|_2^4 = \sigma^2 + \|\theta^\star\|_2^2 \big\{ (c - 1)^2 + c^4 \|\theta^\star\|_2^2 \big\}
\end{equation} and at its minimum the following first-order condition is satisfied: 
\begin{equation}
    \textstyle  \|\theta^\star\|_2^2 \big\{2(c - 1) + 4c^3 \|\theta^\star\|_2^2 \big\} = 0 ~ \text{or,} ~ 2\|\theta^\star\|_2^2 c^3 + c - 1= 0\,.
\end{equation} What remains to be seen is whether the above equation has a solution in the interval $(0, 1]$. This is evident from the following observation: for $f(c) \triangleq 2\|\theta^\star\|_2^2 c^3 + c - 1$ we see that 
\begin{equation}
    f(0) = -1 < 0, ~ f(1) = 2 \|\theta^\star\|_2 \ge 0, ~  \text{ and }f'(c) = 6 \|\theta^\star\|_2^2 c^2  + 1>  0 \,.
\end{equation}
Since $\theta$ is continuous and strictly increasing, we conclude that there is only one root of $f(c) = 0$. Furthermore, from $f(0) < 0 \le f(1)$ we conclude that the root is in the interval $(0, 1]$. 
  This proves the lemma.  
\end{proof}

\begin{proof}[Proof of Theorem \ref{lemma:identifiable}]
    We shall only prove the lemma for (ex-ante). The proof for (ex-post) is identical. As $T_0, T_1, \dots, T_m$ are maps from the barycenter measure to $ P,  Q_{\theta_1}, \dots  Q_{\theta_m}$ in \eqref{eq:identifiability1} and these measures are identical for the optimum $\varphi$, the barycenter problem must satisfy
    \begin{equation}
      \textstyle    W_2^2\big(\mu,  (\nabla\varphi)_\# {P} \big) +  \sum_{k = 1}^m W_2^2 \big(\mu, (\nabla\varphi - \nabla B_{\theta_k})_\#{Q}_{\theta_k} \big) = 0
    \end{equation} and thus each of the Wasserstein are zero. This leads to 
    \begin{equation} \label{eq:tech1}
        \textstyle
        z = (\nabla\varphi) \circ T_0(z) = (\nabla\varphi - \nabla B_{\theta_1}) \circ T_1 (z) = \dots = (\nabla\varphi - \nabla B_{\theta_m}) \circ T_m (z). 
    \end{equation}
    To realize this, let us focus on the first Wasserstein distance: $W_2^2\big(\mu,  (\nabla\varphi)_\# {P} \big) = 0$, which implies that $\nabla \varphi$ is a measure-preserving push-forward map from $P$ to $\mu$. Furthermore, $\nabla \varphi$ is conservative (a derivative of a convex function). Now, since we used the quadratic cost $\|\cdot\|_2^2$ in our optimal transport, it follows from Brenier's theorem that $\nabla \varphi$ is an optimal transport map from $P$ to $\mu$. Furthermore,  $T_0$ is the optimal transport map from $\mu$ to $P$. Thus, a combination of them is the identity map: $z = (\nabla \varphi ) \circ T_0(z)$. We can use similar arguments for $W_2^2 \big(\mu, (\nabla\varphi - \nabla B_{\theta_k})_\#{Q}_{\theta_k} \big) = 0$ to conclude: $z = (\nabla \varphi - \nabla B_{\theta_k} ) \circ T_k(z)$



    The equation \eqref{eq:tech1} reduces to the following equivalent condition that
    \begin{equation} \label{eq:tech2}
       \textstyle  (\nabla\varphi) \circ T_0(z) = (\nabla\varphi - \nabla B_{\theta_1}) \circ T_1 (z) = \dots = (\nabla\varphi - \nabla B_{\theta_m}) \circ T_m (z)
    \end{equation} The solutions of the above equations are unique up to a constant addition is equivalent to $\nabla \varphi$ being unique up to a constant addition. Thus, the cost is unique if and only if $\nabla \varphi \in \{\nabla \varphi^\star + \alpha\} \triangleq \cG$. We write the homogeneous equation to \eqref{eq:tech2} 
    \begin{equation}
        \label{eq:tech-homog}
        \textstyle h\circ T_0(z) = \textstyle h\circ T_1(z) = \dots = \textstyle h\circ T_m(z)
    \end{equation} and define the space of all its solutions as $\cH$. Note that for any $g \in \cG$, $h \in \cH$ and $t \in \reals$ we have $g + t h \in \cG$, because for any $k \in [m]$ we have
    \[
    \begin{aligned}
        \textstyle (g + th - \nabla B_{f_k}) \circ T_k(z)
        & =  \textstyle (g  - \nabla B_{f_k}) \circ T_k(z) + th  \circ T_k(z)\\
        & = \textstyle g  \circ T_0(z) + th  \circ T_0(z)\,.
    \end{aligned}
    \] Thus, $\cG = \{\nabla \varphi^\star(z) + \alpha, \alpha\in \reals^d\}$ if and only if all the functions in $\cH$ are constant.  
\end{proof}
\begin{proof}[Proof of Corollary \ref{lemma:identifiable2}]
Since there is only one $Q_{\theta_1}$ in addition to $P$, the optimization is reduced to 
\[
\textstyle  \underset{\varphi \in \Phi_\cvx}{\min} \underset{\mu \in \Delta (\cZ)}{\min} W_2^2\big(\mu,  (\nabla\varphi)_\# {P} \big) +   W_2^2 \big(\mu, (\nabla\varphi - \nabla B_{\theta_1})_\#{Q}_{\theta_1} \big)
\]
    In case of barycenter with two distributions, the optimization simplifies to 
\[
\textstyle \underset{\varphi \in \Phi_\cvx}{\min} ~   W_2^2 \big((\nabla\varphi)_\# {P}, (\nabla\varphi - \nabla B_{\theta_1})_\#{Q}_{\theta_1} \big)
\] At the optimum $\nabla \varphi$ the objective is zero and hence $(\nabla\varphi)_\# {P}$ and $(\nabla\varphi - \nabla B_{\theta_1})_\#{Q}_{\theta_1}$ are equal to the barycenter. Let $T_0$ and $T_1$ be the optimal transport map from the barycenter to the distributions $P$ to $Q_{\theta_1}$. Then, we need to verify that the equation 
\[
 g( T_0 (z) )= g(T_1 (z) ) \text{ or, equivalently } g(z ) = g(T_1 \circ T_0^{-1}(z))
\] are trivial. Here,  $T = T_1 \circ T_0^{-1}$ is the optimal transport map from $P$ to $Q_{\theta_1}$, which, by the first order condition of \eqref{eq:agent-util-maximization} satisfies
\begin{equation} \label{eq:eq1}
    \nabla \varphi \circ T (z) = \nabla \varphi (z) - \nabla B_{\theta_1}(z). 
\end{equation} 

We just need to prove that the only solutions for the equation 
 \begin{equation} \label{eq:eq2}
     g(T(z)) = g(z)
 \end{equation} is constant solutions. According to \eqref{eq:eq2} we have 
 \begin{equation} \label{eq:eq3}
     g(z) = g (T^n(z)), ~~ T^n(z) \triangleq \underbrace{T \circ \dots \circ T}_{n~ \text{times}}(z)
 \end{equation}

As \eqref{eq:eq1} is a first-order condition for $T(z) = \argmax_{z'} B_{\theta_1}(z') - d_{\varphi^\star}(z, z')$ we notice that $T$ is a proximal gradient ascend step for the strictly concave function $B_{\theta_1}$. Thus, for any $z$ we have $T^n(z) \to z^\star \triangleq  \argmax_{z'} B_{\theta_1}(z')$ as $n \to \infty$. This and \eqref{eq:eq3} implies 
\[
g(z) = g(z^\star)~~ \text{for all} ~~ z\,.
\] This implies $g$ is a constant function, which completes the proof.

\end{proof}

\begin{proof}[Proof of Theorem \ref{thm:rate-of-convergence}]

Let us denote the $\Pi^\star_\gamma$ and $\widehat\Pi_\gamma$ as the optimal couplings defined as:
\begin{equation}
    \Pi^\star_\gamma = \argmin_{\Pi \in \Delta(P , Q_{\theta})} \bL(\Pi, \gamma), ~~  \widehat\Pi_\gamma = \argmin_{\Pi \in \Delta(\widehat P ,\widehat  Q_{\theta})} \bL(\Pi, \gamma)
\end{equation} 
which are unique \citep{villani2009Optimal}. 
Using the local strong convexity of $\bL (\Pi^\star_\gamma, \gamma)$ at $\gamma^\star$ and local strong convexity of $\bL (\widehat \Pi_\gamma, \gamma)$ at $\widehat \gamma$ we have 
\begin{equation}
    \begin{aligned}
        &\textstyle \bL(\Pi^\star_{\widehat \gamma},\widehat \gamma) - \bL(\Pi^\star_{ \gamma^\star}, \gamma^\star) \ge \underbrace{\langle \widehat \gamma - \gamma^\star, \frac{d}{d\gamma}  \bL (\Pi^\star_{\gamma^\star}, \gamma^\star)\rangle}_{=0} + \frac{\kappa}{2} \|\widehat\gamma - \gamma^\star\|_2^2 \\
        &\textstyle \bL(\widehat \Pi_{ \gamma^\star}, \gamma^\star) - \bL(\widehat \Pi_{\widehat \gamma},\widehat \gamma)   \ge \underbrace{\langle \widehat \gamma - \gamma^\star, \frac{d}{d\gamma}  \bL (\widehat \Pi_{\widehat \gamma}, \widehat \gamma)\rangle}_{=0} + \frac{\kappa}{2} \|\widehat\gamma - \gamma^\star\|_2^2 \\
    \end{aligned}
\end{equation} We add the two equations to yield: 
\[
\begin{aligned}
    \kappa \|\widehat \gamma - \gamma^\star \|_2^2 & \le \bL(\Pi^\star_{\widehat \gamma},\widehat \gamma) - \bL(\Pi^\star_{ \gamma^\star}, \gamma^\star) + \bL(\widehat \Pi_{ \gamma^\star}, \gamma^\star) - \bL(\widehat \Pi_{\widehat \gamma},\widehat \gamma) \\
    & = \big |\bL(\Pi^\star_{\widehat \gamma},\widehat \gamma) - \bL(\widehat \Pi_{\widehat \gamma},\widehat \gamma)\big |  + \big|\bL(\widehat \Pi_{ \gamma^\star}, \gamma^\star)  - \bL(\Pi^\star_{ \gamma^\star}, \gamma^\star) \big |
\end{aligned}
\] We take an expectation over the randomness in $\widehat P$ and $\widehat Q_{\theta}$ to yield:

\[
\begin{aligned}
    \kappa \Ex\big [\|\widehat \gamma - \gamma^\star \|_2^2\big] & \le \Ex\big[ |\bL(\Pi^\star_{\widehat \gamma},\widehat \gamma) - \bL(\widehat \Pi_{\widehat \gamma},\widehat \gamma)|\big ]  + \Ex\big[|\bL(\widehat \Pi_{ \gamma^\star}, \gamma^\star)  - \bL(\Pi^\star_{ \gamma^\star}, \gamma^\star) |\big ] \\
    & \le \textstyle 2 \sup_{\gamma \in B_{\delta}(\gamma^{\star})} \Ex\big[ |\bL(\Pi^\star_{ \gamma}, \gamma) - \bL(\widehat \Pi_{ \gamma}, \gamma)|\big]\,.
\end{aligned}
\]
where $B_{\delta}(\gamma^{\star}) \triangleq \{\gamma: \|\gamma - \gamma^\star\|_2 \le \delta\}$
for some $\delta>0$. Thus, the $\ell_2$ error for estimating the parameter is
\[
\textstyle \Ex\big [\|\widehat \gamma - \gamma^\star \|_2^2\big] \le \frac2 \kappa \sup_{\gamma \in B_{\delta}(\gamma^{\star})} \Ex\big[ |\bL(\Pi^\star_{ \gamma}, \gamma) - \bL(\widehat \Pi_{ \gamma}, \gamma)|\big]\,.
\]

Under the assumption that $P$ is compactly supported on $\cX_P$ the superdomain $\widetilde \cX_P$ defined below is also compactly supported:
\[
\widetilde \cX_P = \{ \psi_\gamma (x): x \in \cX_P, \gamma \in B_{\delta}(\gamma^{\star})\}.
\] Since $P$ is compactly supperted, so it $Q_{\theta}$, as it follows the microfoundation. Defining $\cY$ as a support for $Q_{\theta}$, since $\cY$ the superdomain $\widetilde \cY$ is also compactly supported. 
\[
\widetilde \cY = \{\psi_\gamma(y) - b_{\theta}(y): y \in \cY , \gamma \in B_{\delta}(\gamma^{\star})\}
\] Defining $\tilde \psi_\gamma(y) = \psi_\gamma(y) - b_{\theta}(y)$, $\mu_\gamma  = (\psi_\gamma)_{\#} P \in \Delta(\widetilde \cX_P)$ and $\nu_\gamma = (\tilde \psi_\gamma)_{\#}  Q_{\theta} \in \Delta(\widetilde \cY)$ we notice 
\[
\textstyle \bL(\Pi^\star_{ \gamma}, \gamma) = W_2^2 \big (\mu_\gamma, \nu_\gamma  \big ), ~~ \bL(\widehat \Pi_{ \gamma}, \gamma) = W_2^2 \big (\widehat \mu_\gamma, \widehat\nu_\gamma   \big )
\] where $\widehat \mu_\gamma$ and $\widehat \nu_\gamma$ are empirical measures of $\mu$ and $\nu$:
\[
\textstyle \widehat \mu_\gamma  = (\psi_\gamma)_{\#} \widehat P, ~~ \widehat \nu_\gamma = (\tilde \psi_\gamma)_{\#} \widehat Q_{\theta}\,.
\] Since both the supports $\widetilde \cX_P$ and $\widetilde \cY$ are compact, we use the bound in \citet[Theorem 2]{manole2024sharp} (with $\alpha = 2$) to obtain:
\[
\textstyle \Ex\big [\|\widehat \gamma - \gamma^\star \|_2^2\big] \le \frac2 \kappa \sup_{\gamma \in B_{\delta}(\gamma^{\star})} \Ex\big[ |\bL(\Pi^\star_{ \gamma}, \gamma) - \bL(\widehat \Pi_{ \gamma}, \gamma)|\big] \le \frac2 \kappa C n ^{- \frac{2}{d}}
\] for some $C>0$. This completes the proof. 
    
\end{proof}

\section{Multi-dimensional cost estimation with convex neural network} \label{sec:multivariate-cost}

Here, we work in the multi-dimensional case with an ex-ante distribution and one ex-post distribution.  The $\nabla \varphi$ is estimated using a Single Layer Neural Network (SLNN). More specifically, we model the potential function $\varphi$ as a single hidden layer convex neural network: 
\[ \textstyle
\varphi_\gamma(x) = \sum_{j = 1}^{h} \delta_j \sigma(\omega_j^\top x+ \kappa_j), ~~ \gamma = (\Omega,  \kappa, \delta), \Omega = \{\omega_j;j \in [h]\} \in \reals^{h \times d}, \kappa \in \reals^h, \delta_j \ge 0\,.
\] where $\sigma$ is a strictly convex and increasing function. For this section, we use $\sigma(t) = \log(1 + e^t)$. 

\begin{figure}[H]
\vspace{-0.4cm}
    \centering
    \includegraphics[width=0.35\textwidth]{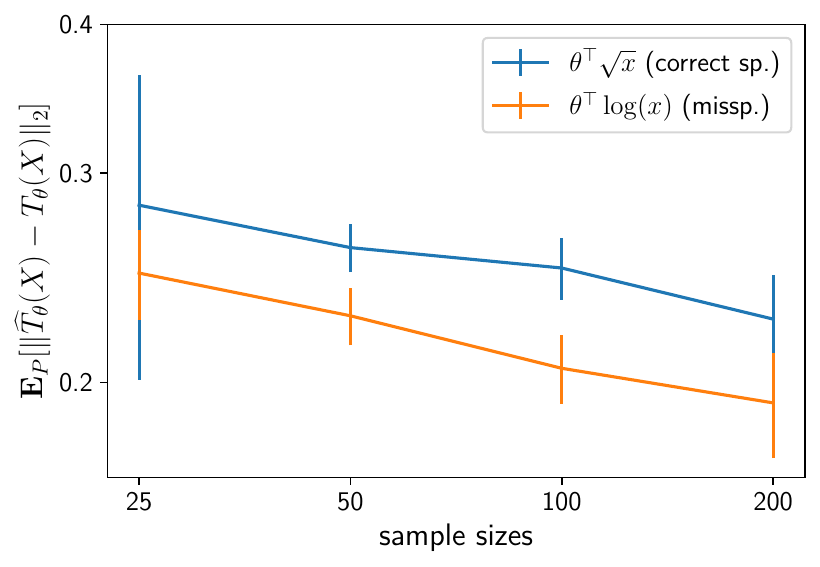}
    \caption{Estimation error for the map $T_\theta$ in the multivariate case with a convex neural network.}
    \label{fig:exp3}
    \vspace{-0.2cm}
\end{figure}

For our experiment, we consider $x\in \reals^5$ and five hidden units for the $\varphi_\gamma$. We use $\sqrt{\cdot}$ as the benefit function. With more details, we consider the benefit function the benefit $B_\theta(x') = \sum_j\theta_j x_j^{\frac12}$, where $\theta$ is a randomly generated vector with positive entries. We use 8 different random seeds and plot the average result. The unknown Bregman potential is $\varphi(x)=\frac{\sigma}{2}\|x\|^2$, implying $\nabla \varphi(x)= \sigma x$ (we assume $\sigma=.1$). The agents start with $X\sim P =  \bN(0, \bI_d)$, and then change their values of $X$ by maximizing their net gain:
\[
T_{\theta} (X) = \argmax_{x'} B_\theta(x') - c_{\varphi}(X,x').
\]
where in this case $c_{\varphi}(X,x') = \nicefrac{\sigma}{2}\|X-x'\|^2$. After we train the SLNN we get an estimate $\widehat{\varphi}$ and we estimate $T_{\theta}$ by 
\[
\widehat{T}_{\theta} (X) = \argmax_{x'} B_\theta(x') - c_{\widehat{\varphi}}(X,x').
\]

In Figure \ref{fig:exp3}, we compare the error in the estimation of $T_\theta$ for two benefit functions: $\sum_j\theta_j x_j^{\frac12}$, which is well specified, and $\sum_j\theta_j \log(x_j)$, which is misspecified. In both cases, we see an improved estimation for $T_\theta$ for a larger sample size, which again exhibits the robustness in estimating $T_\theta$ despite having a misspecified benefit.  

\section{Experiments}

\subsection{Model training procedure in Section \ref{sec:parametric}}\label{sec:detail_append}
In this experiment, agents can manipulate their features once a logistic regression classifier, $\theta=(\alpha, \beta)$, is published. In this context, $\alpha$ is the intercept term while $\beta$ is the slope. We assume\footnote{In our experiment, $\beta$ assumes negative entries.} $B_\theta(x)=|\beta|^\top x=-\beta^\top x$ \citep{perdomo2020Performative} and consider the case of parametric cost estimation with $\varphi(x)=\frac{1}{2}x^\top Mx$ where $M$ is a positive definite matrix to be learned. As we show in \eqref{eq:example1-agents-map}, agents' response in this setting is $T_\theta(x) = X - M^{-1}\beta$. For the main experiment, we assume $M=0.1* I_3$ and include more results in Appendix \ref{sec:append_extra_results}.

To apply our method to the problem of minimizing the performative risk, we publish a set of classifiers $\{\theta_0,\cdots,\theta_{K-1}\}$ randomly sampled from a Gaussian distribution centered around the optimal classifier in the ex-ante distribution $P$. We collect $n=250$ data points from the ex-ante distribution and each one of the ex-post distributions and then estimate $M$ using Algorithm \ref{alg:pushforward-alignment} where the minimization step over the set of all matrices is done using the Cholesky decomposition $M=LL^\top$ by optimizing for $L$ over the set of lower triangular matrices. After an estimate $\widehat{M}$ is obtained, we plug it in the response map to get $\widehat{T}_\theta(x) = X - \widehat{M}^{-1}\beta$. The (estimated) response map $\widehat{T}_\theta(x)$ is then used to learn the best performative logistic classifier by minimizing an estimate for the (unnormalized) performative cross-entropy loss
\begin{align*}
    \cL(\theta) &=   - \sum_{k=0}^{K-1} \sum_{i=0}^{n-1} \big\{Y_{ik}\log(p_{ik}) + (1-Y_{ik})\log(1-p_{ik})\big\}-\sum_{i=0}^{n-1} \big\{Y_{i}\log(p_{i}) + (1-Y_{i})\log(1-p_{i})\big\}.
     \vspace{-0.6cm}
\end{align*} 

The term $p_{ik}=\big\{1+\exp[-(\alpha+\beta^\top \widehat{T}_\theta(\widehat{X}_{ik}))]\big\}^{-1}$ with $\widehat{X}_{ik} = \widehat{T}^{-1}_{\theta_k}(X'_{ik})$, $X'_{ik}=T_{\theta_k}(X_{ik})$ accounts for samples from the \emph{ex-post} (with $k$ indexing inducement by $\theta_k$) distributions  and the term $p_{i}=\big\{1+\exp[-(\alpha+\beta^\top \widehat{T}_\theta(X_{i}))]\big\}$ accounts for samples from the ex-ante distribution. 

\subsection{Extra plots}\label{sec:append_extra_results}

In this section, we present some extra plots derived from what is presented in Section \ref{sec:experiments}. 

Figures \ref{fig:exp1_varphi_est_append} and \ref{fig:exp1_T_est_append} complement figures \ref{fig:exp1_varphi_est} and \ref{fig:exp1_T_est} from the main text with more types of misspecification

\begin{figure}[H]
\centering
\includegraphics[width=.95\textwidth]{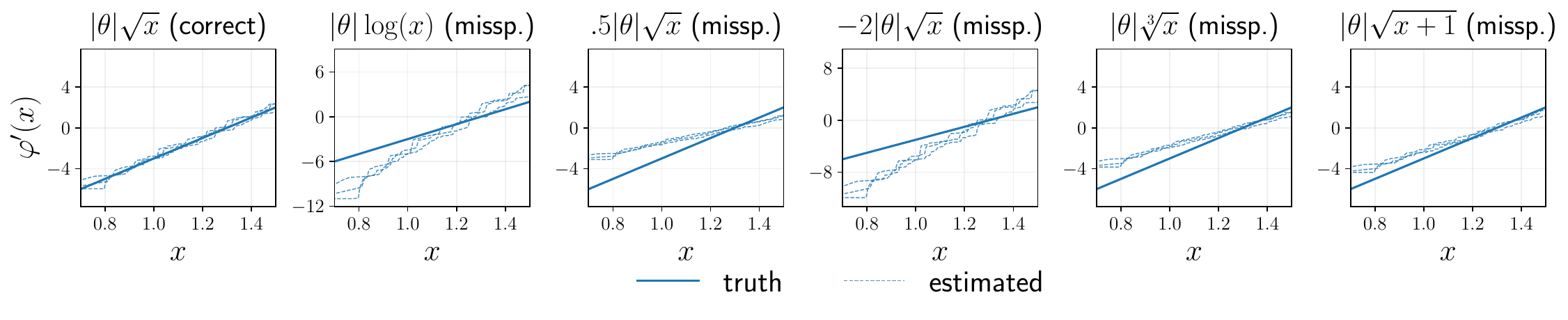}
\vspace{-0.2cm}
\caption{\small The function $\varphi'$ is well estimated when he benefit function $B_\theta $ is correctly specified. On the other hand, misspecification of $B_\theta $ leads to biased estimates of $\varphi'$.}
\label{fig:exp1_varphi_est_append}
\end{figure}

\begin{figure}[H]
\centering
\includegraphics[width=.95\textwidth]{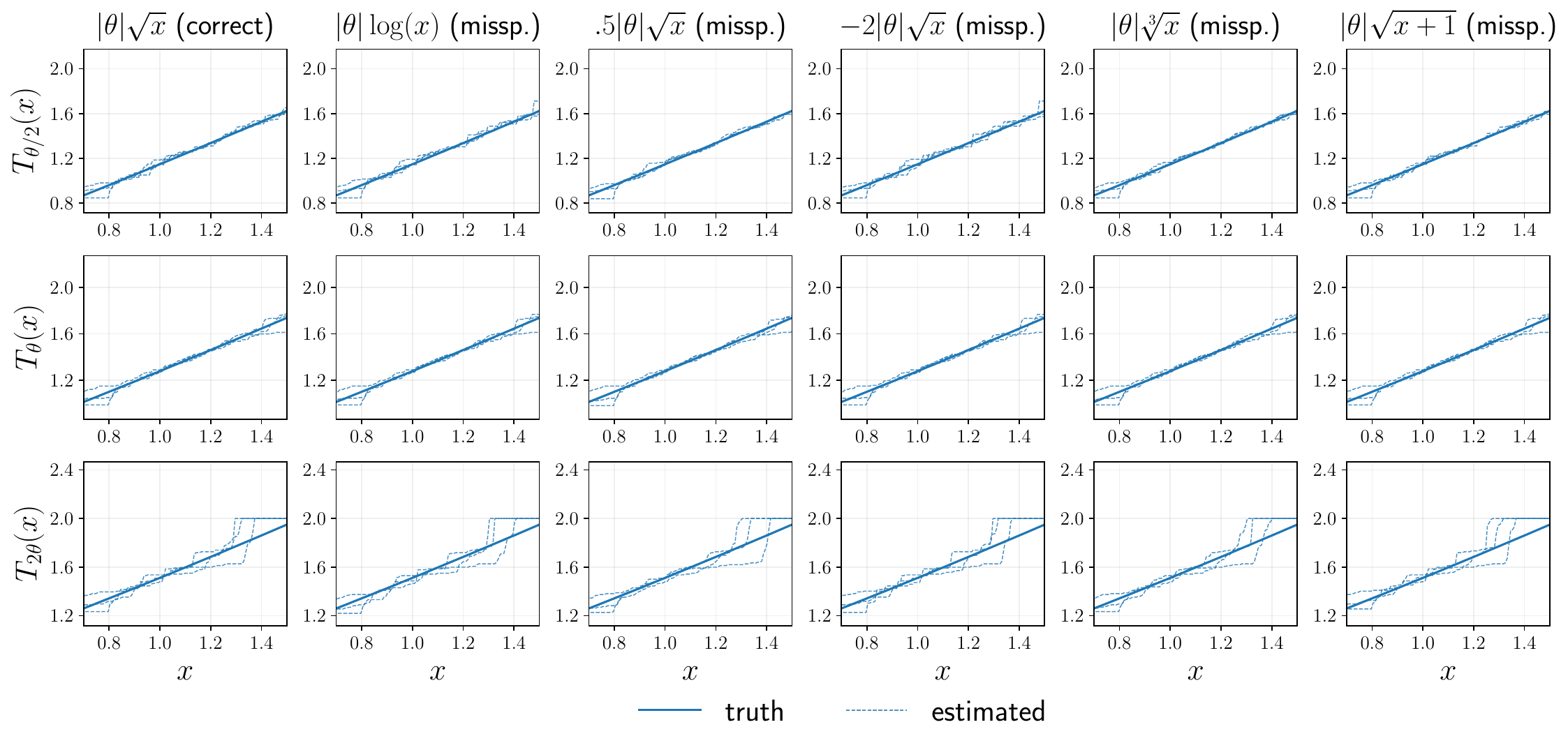}
\vspace{-0.2cm}
\caption{\small The estimation of the transport map $T_{\tilde{\theta}}$ is robust to the misspecification of the benefit function for values of $\tilde{\theta}$ different from those used to induce the ex-post distribution and estimate $\varphi'$, \ie, $\theta$.}
\label{fig:exp1_T_est_append}
\end{figure}

\begin{figure}[H]
\centering
\includegraphics[width=.8\textwidth]{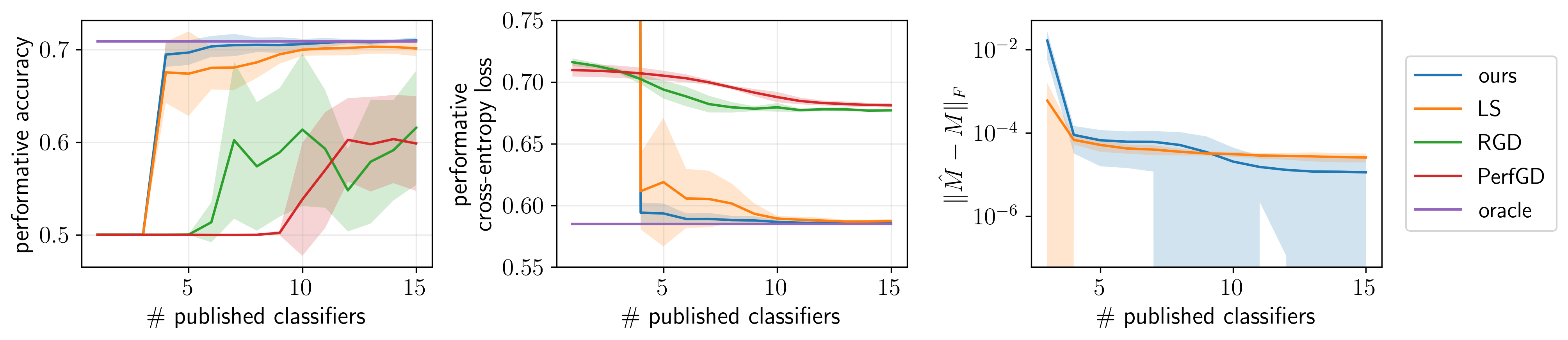}\\
\includegraphics[width=.8\textwidth]{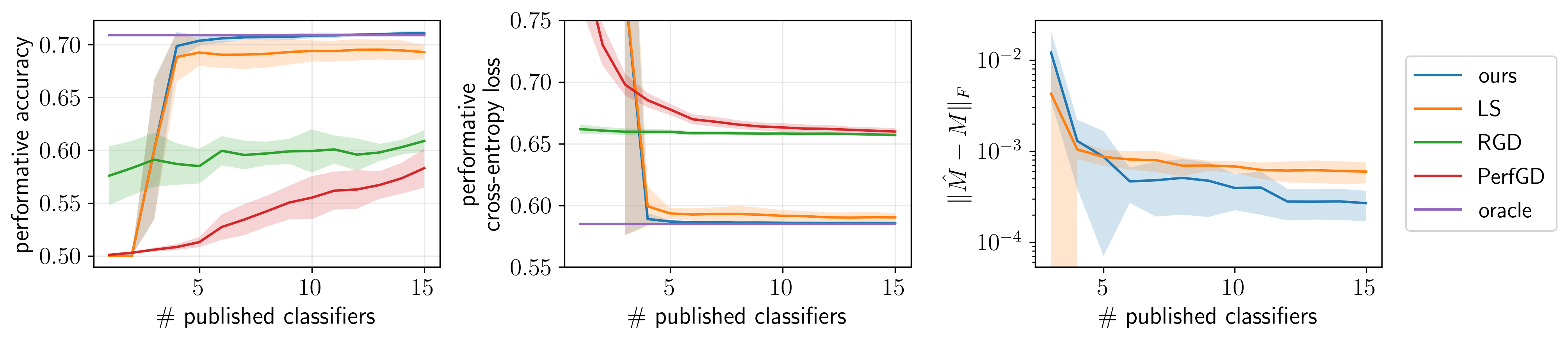}
\caption{Performative performance for different numbers of published classifiers ($M=.01 * I_3$ in the upper plots and $M=.05 * I_3$ in the lower plots).}
\end{figure}



\end{document}